\definecolor{mydarkblue}{rgb}{0,0.08,0.45}
\newtheorem{theorem}{Theorem}[section]
\newtheorem{lemma}[theorem]{Lemma}
\renewenvironment{proof}{\textbf{Proof.}}{\QED\bigskip}
\newcommand{\BEAS}{\begin{eqnarray*}}
\newcommand{\EEAS}{\end{eqnarray*}}
\newcommand{\BEA}{\begin{eqnarray}}
\newcommand{\EEA}{\end{eqnarray}}
\newcommand{\BEQ}{\begin{equation}}
\newcommand{\EEQ}{\end{equation}}
\newcommand{\BIT}{\begin{itemize}}
\newcommand{\EIT}{\end{itemize}}
\newcommand{\BNUM}{\begin{enumerate}}
\newcommand{\ENUM}{\end{enumerate}}
\newcommand{\BPM}{\begin{pmatrix}}
\newcommand{\EPM}{\end{pmatrix}}
\newcommand{\BA}{\begin{array}}
\newcommand{\EA}{\end{array}}
\newcommand{\ones}{\mathbf 1}
\newcommand{\reals}{{\mathbb R}}
\newcommand{\symm}{{\mbox{\bf S}}}  
\newcommand{\Tr}{\mathop{\bf Tr}}
\newcommand{\diag}{\mathop{\bf diag}}
\newcommand{\idm}{\mathbf{I}}
\newcommand{\QED}{~~\rule[-1pt]{6pt}{6pt}}
\providecommand{\cN}{{\mathcal N}} 
\providecommand{\cO}{{\mathcal O}} 
\providecommand{\tX}{\tilde{X}}
\providecommand{\0}{{\mathbf 0}}
\providecommand{\R}{{\mathbb R}} 
\providecommand{\s}{{\mathbf s}} 
\providecommand{\tildeD}{{\tilde{D}}}
\providecommand{\cove}{\Sigma} 
\providecommand{\abs}[1]{\left\lvert#1\right\rvert}
\providecommand{\call}[2]{\textsc{\small #1}\ensuremath{\left( #2 \right)}}
\definecolor{mydarkblue}{rgb}{0,0.08,0.6}
\title{FANOK: Knockoffs in Linear Time}
\author{Armin Askari\thanks{Equal Contribution}\\
UC Berkeley\\
\texttt{aaskari@berkeley.edu}
\And
Quentin Rebjock$^\ast$\\ 
EPFL, Inria\\
\texttt{quentin.rebjock@epfl.ch}\\
\And
Alexandre d'Aspremont\\
CNRS \& ENS Paris.\\
\texttt{aspremon@ens.fr}\\
\AND
Laurent El Ghaoui\\
UC Berkeley\\
\texttt{elghaoui@berkeley.edu}\\
}
\begin{document}    

\maketitle

\setcounter{footnote}{0} 

\begin{abstract}
We describe a series of algorithms that efficiently implement Gaussian {\em model-X} knockoffs to control the false discovery rate on large scale feature selection problems. Identifying the knockoff distribution requires solving a large scale semidefinite program for which we derive several efficient methods. One handles generic covariance matrices, has a complexity scaling as $\cO(p^3)$ where $p$ is the ambient dimension, while another assumes a rank $k$ factor model on the covariance matrix to reduce this complexity bound to $O(pk^2)$. We also derive efficient procedures to both estimate factor models and sample knockoff covariates with complexity linear in the dimension. We test our methods on problems with $p$ as large as 500,000. \footnote{A python implementation of our model can be found at \url{https://github.com/qrebjock/fanok}}
\end{abstract}

\section{Introduction}\label{s:intro}

Feature selection is a key preprocessing step in prediction tasks. Pruning out irrelevant variables both improves test performance by reducing noise and helps interpretation by focusing the prediction task on a short list of important variables. In many cases, the variable selection step is in fact more important than the prediction itself. The tradeoff between prediction performance and model size is typically very favorable. However, feature selection needs to select among an exponential number of hypotheses (the subset of selected variables) using a limited number of samples, and is thus naturally exposed to false discoveries. A lot of effort has been focused on controlling the false discovery rate (FDR) in feature selection, with notably \cite{Benj95} controlling FDR using $p$-values. These results work well in settings where $p$-values are readily available and has been extended, in part, to more sophisticated feature selection procedures in what is known  as post selection inference (see e.g. \citep{Berk13,Lee16}). This requires computing $p$-values after complex prediction tasks, which is far from trivial. 

A more flexible alternative is provided by the {\em knockoff framework} developed in \cite{Barb15,Cand18,Barb19}. In this setting, we first generate {\em knockoff} covariates whose distribution roughly matches that of the true covariates, except that knockoffs are designed to be conditionally independent of the response, and hence should never be selected by a feature selection procedure. This last fact helps in controlling the false discovery rate. The procedure in \cite{Cand18} shows how to design knockoffs in the Gaussian case and requires solving a semidefinite program (SDP) of dimension $p$ equal to the ambient dimension. While the knockoff framework does not explicitly control power, the SDP optimally decorrelates true covariates and their knockoff, which empirically improves power. The current package provided by the authors of \cite{Cand18} uses generic interior point methods (IPM), which scale roughly as $\cO(p^{4.5})$, which can be reduced to $\cO(p^{3.5})$ using problem structure \citep{Boyd03}. Feature selection is naturally a high dimensional problem, making generic IPM solvers ill suited for the task. Simple tricks produce simple feasible solutions to the knockoff SDP, but at the expense of a loss in power. Clustering the covariance matrix also allows \cite{Cand18} to solve much larger problems, but the limitations on maximum block size remains.


Here, we use problem structure to derive a block coordinate descent method and solve a barrier formulation as in e.g. \citep{dAsp06b,Wen09}. Iterations require low rank Cholesky updates which can be handled efficiently. This allows us to to produce a first algorithm which handles generic covariance matrices, and has a complexity scaling as $\cO(p^3)$ where $p$ is the ambient dimension. We then derive another method which assumes a rank $k$ factor model on the covariance matrix to reduce this complexity bound to $O(pk^2)$. This last method is potentially unstable in very particular scenarios, but we do not observe instabilities in practice.  We also derive efficient procedures to both estimate factor models and sample knockoff covariates with complexity linear in the dimension. We test our methods on problems with $p$ as large as 500,000.


\subsection{Notation} 
Let $[p] = \{1,\hdots,p\}$. Given $M \in \R^{p \times p}$ and two sets of indices $I, J \subseteq [p]$, $M_{I,J}$ denotes the $\abs{I} \times \abs{J}$ matrix obtained by keeping the $\abs{I}$ rows and $\abs{J}$ columns indexed by $I$ and $J$ respectively. For simplicity, an integer $j$ denotes the set $\left\{ j \right\}$, $j^c$ denotes the set $[p] \setminus \left\{ j \right\}$ and $:$ denotes either all rows or columns in the matrix subscript context. For example,
\begin{align*}
    M = \begin{bmatrix}
    1 & 2 & 3\\
    4 & 5 & 6\\
    7 & 8 & 9
\end{bmatrix} \;\; \Longrightarrow \;\; M_{1^c, 1^c} = \begin{bmatrix}
    5 & 6\\
    8 & 9
\end{bmatrix}, \;\; M_{1^c, 1} = \begin{bmatrix}
    4\\
    7
\end{bmatrix}, \;\; M_{1^c, :} = \begin{bmatrix}
    4 & 5 & 6\\
    7 & 8 & 9
\end{bmatrix}, \;\; M_{1,1} = 1
\end{align*}
For $s \in \mathbb{R}^p$, $D = \diag (s)$ denotes a $p\times p$ diagonal matrix with $D_{ii} = s_i$. For $M \in \mathbb{R}^{p \times p}$, $m = \diag (M)$ denotes a vector in $\mathbb{R}^p$ with $m_i = M_{ii}$. Unless otherwise stated, $x_j$ and $x_j^\prime$ denote the $j^\mathrm{th}$ column and row of a matrix $X$ respectively. $\symm_p$ denotes the set of $p \times p$ symmetric matrices.

\subsection{Primer on Knockoffs}
Given random covariates and a random response $(x, y) \in \R^{p} \times \R$, the knockoff framework of \cite{Barb15,Cand18,Barb19} seeks to control the false discovery rate in feature selection by constructing a new family of random variables $\tilde x \in \R^p$ called {\em knockoffs} which have a joint distribution comparable to their counterparts $x$ but are independent of the response $y$. As a result, these knockoff variables should not be selected by any reasonable feature selection procedure. The knockoff framework controls the FDR by keeping the features which are more strongly selected than their knockoff counterpart (which usually requires solving a LASSO-type problem; see Section 3.2 of~\cite{Cand18}).

More specifically, the {\em model-X knockoff} framework of~\cite{Cand18} formally defines knockoffs as a new family of random variables $\tilde x \in \mathbb{R}^p$ such that $\tilde x \perp \!\!\! \perp y \mid x$, and for any $S\subset[p]$, $[x;\tilde x]$ satisfies
\[
[x;\tilde x]_{\mathrm{swap}(S)} \overset{d}{=} [x;\tilde x]
\]
where $[x;\tilde x]_{\mathrm{swap}(S)}$ is obtained from $[x;\tilde x]$ by swapping the $j$th entries of $x$ and $\tilde x$ for all $j \in S$. In the Gaussian case where $x \sim \mathcal{N}(0,\Sigma)$ this invariance property means that $[x;\tilde x]$ is also Gaussian with covariance matrix given by 
\[
  \begin{bmatrix}
    \Sigma & \Sigma - \diag (s)\\
    \Sigma - \diag (s) & \Sigma
\end{bmatrix} \succeq 0 
\]
for some $s \in \R^p$ such that the matrix is positive semidefinite (PSD), i.e. such that $0 \preceq \diag (\s) \preceq 2 \Sigma.$

Without loss of generality, we assume that $x$ is zero mean and that $\Sigma$ is a correlation matrix throughout. Given an observation $x$, Gaussian knockoffs are then sampled from the conditional distribution
$\tilde x \mid x \sim \cN\left( \mu,\, \Omega \right)$ such that 
\begin{align}\label{eq:conditional_gaussian_knockoffs}
\begin{split}
\mu &= x - \diag(s)\Sigma^{-1}x\\
\Omega &= \diag (s)\left( 2\idm_p - \Sigma^{-1}\diag (s) \right).
\end{split}
\end{align}
For the remainder of the paper, let $X \in \mathbb{R}^{p \times n}$ denote the scaled data matrix for the response vector $y \in \mathbb{R}^n$. After we sample all our knockoffs and aggregate them into the knockoff matrix $\tilde X \in \mathbb{R}^{p \times n}$, we compute a feature statistic in order to do feature selection. Intuitively, we want to construct knockoffs that are not ``too similar" to the original features (i.e. with low $\langle x_i^\prime, \tilde x_i^\prime \rangle = 1 - s_i$). To do so, we maximize the entries of $s$, solving the following SDP 
\BEQ\label{eq:sdp}
\BA{ll}
\mbox{maximize} & \ones^\top s\\
\mbox{subject to} & \diag (s) \preceq 2\Sigma\\
                &  0 \leq s \leq 1
\EA\EEQ
In this paper, we are concerned with solving \eqref{eq:sdp} as efficiently as possible. 


\section{Solving for Second Order Knockoffs}\label{s:sdp}
Solving the semidefinite program in \eqref{eq:sdp} using generic interior point methods \citep{Nest94,Helm96,Boyd03} has complexity $\mathcal{O}(p^{4.5})$ or $\mathcal{O}(p^{3.5})$ exploiting structure, which precludes their use on large-scale examples. In what follows, we will describe a coordinate ascent method that better exploits the structure of the problem. Each barrier problem has complexity $\cO(p^3)$, but when the covariance matrix $\Sigma$ is assumed to have a {\em diagonal plus low-rank} (aka factor model) structure, this complexity can be reduced to $\mathcal{O}(pk^2)$ where $k \ll p$. For simplicity, we will assume in this section that $\Sigma \succ 0$ which means in particular that $n \geq p$.
For the regime when $n < p$, see Section \ref{s:covest} on how adapt our method.

\subsection{A Basic Coordinate Ascent Algorithm}\label{ss:basic}
Here, as in \cite{Bane05,wen2012block}, we exploit the fact that the feasible set of program~\eqref{eq:sdp} has a product structure amenable to block coordinate ascent to derive an efficient algorithm for maximizing a barrier formulation of \eqref{eq:sdp} written
\BEQ\label{eq:sdp_log_barrier}
\BA{ll}
\mbox{maximize} & \ones^\top s + \lambda ~ \log\det\big( 2\Sigma - \diag (s) \big)\\
\mbox{subject to} & 0 \leq s \leq 1
\EA\EEQ
in the variable $s \in \R^p$, where $\lambda > 0$ is a barrier parameter. Note that the dual of \eqref{s:sdp} writes
\[\BA{ll}
\mbox{minimize} & 2\Tr (\Lambda \Sigma) + \ones^\top \eta\\
\mbox{subject to} & \diag (\Lambda)  + \eta \geq 1\\
& \Lambda \succeq 0, \; \eta \geq 0
\EA\]
and could be solved by adapting the block-coordinate method as in \cite{wen2012block}. Here however, we are focused on getting a solution to the primal problem in~\eqref{eq:sdp}, hence we focus on a block coordinate algorithm for solving~\eqref{eq:sdp_log_barrier}. We first recall the following key fact.

\begin{lemma}\label{lemma:det_matrix}
For any symmetric, invertible matrix $M \in \symm_p$ and any $j \in [p]$,
\begin{equation*}
    \det\left( M \right) = \big( M_{j,\, j} - M_{j^c,\, j}^\top M_{j^c,\, j^c}^{-1} M_{j^c,\, j} \big)
        \cdot\det\left( M_{j^c,\, j^c} \right)
\end{equation*}
\end{lemma}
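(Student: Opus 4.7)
The plan is to reduce the claim to the classical Schur complement determinant formula by a symmetric permutation moving index $j$ to the last position. Let $P$ be the permutation matrix that swaps coordinates $j$ and $p$. Since $P^\top = P^{-1}$ and $\det(P) = \pm 1$, we have $\det(PMP^\top) = \det(M)$, and
\[
PMP^\top = \begin{bmatrix} A & b \\ b^\top & c \end{bmatrix},
\qquad
A = M_{j^c,\, j^c},\quad b = M_{j^c,\, j},\quad c = M_{j,\, j},
\]
so both factors on the right-hand side of the stated identity are invariant under the relabeling. This reduces the lemma to the case $j = p$.

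Next I would write down the block $LDU$ factorization that diagonalizes the $2 \times 2$ block form. Provided $A$ is invertible (an implicit requirement of the stated formula, automatically satisfied in the paper's use-case where $M = 2\Sigma - \diag(s)$ is taken in the interior of the log-barrier and hence positive definite, so every principal submatrix is invertible), one verifies by direct block multiplication that
\[
\begin{bmatrix} A & b \\ b^\top & c \end{bmatrix}
=
\begin{bmatrix} I_{p-1} & 0 \\ b^\top A^{-1} & 1 \end{bmatrix}
\begin{bmatrix} A & 0 \\ 0 & c - b^\top A^{-1} b \end{bmatrix}
\begin{bmatrix} I_{p-1} & A^{-1} b \\ 0 & 1 \end{bmatrix}.
\]
Taking determinants, the two unit-triangular factors contribute $1$, so multiplicativity of $\det$ on the block-diagonal middle factor yields
\[
\det\!\begin{bmatrix} A & b \\ b^\top & c \end{bmatrix} = \bigl(c - b^\top A^{-1} b\bigr)\,\det(A),
\]
which is exactly the claimed identity after restoring the notations $A = M_{j^c, j^c}$, $b = M_{j^c, j}$, $c = M_{j,j}$.

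The main obstacle is essentially bookkeeping rather than mathematical: one must confirm that the implicit nondegeneracy assumption $\det(M_{j^c, j^c}) \neq 0$ holds wherever the lemma is invoked, and check that the symmetric permutation argument preserves both sides of the identity. No genuinely subtle step enters; the value of stating the lemma at this point in the paper is that in the subsequent coordinate-ascent derivation it will let us collapse the $\log\det$ term in \eqref{eq:sdp_log_barrier} into an explicit scalar function of the single coordinate $s_j$, enabling a closed-form one-dimensional maximization per block update.
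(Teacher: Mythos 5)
Your proof is correct: the reduction to $j=p$ by a symmetric permutation followed by the block $LDU$ factorization and the Schur complement determinant identity is the canonical derivation, and the paper itself states this lemma without proof as a recalled standard fact, so there is nothing to diverge from. Your side remark that the formula implicitly requires $M_{j^c,\,j^c}$ to be invertible (not guaranteed by invertibility of $M$ alone, but automatic in the paper's use case where $2\Sigma - \diag(s) \succ 0$ in the interior of the barrier) is a legitimate and worthwhile observation.
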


On the barrier problem~\eqref{eq:sdp_log_barrier}, Lemma~\ref{lemma:det_matrix} yields
\[
    \log\det\big( 2\Sigma - \diag (s) \big) =
        \log\big( 2\Sigma_{j,\, j} - s_j - 4\Sigma_{j^c,\, j}^\top Q_j^{-1} \Sigma_{j^c,\, j} \big)
            + \log\det\left( Q_j \right)
\]
where $Q_j = 2\Sigma_{j^c,\, j^c} - \diag(s_{j^c})$ does not depend on $s_j$. Using this decomposition, maximizing over $s_j$ in~\eqref{eq:sdp_log_barrier} and leaving all other entries fixed, the first order optimality condition gives
\begin{equation}\label{eq:opt_cond}
    s_j^\star := \min \big(1,\max\left(
        2\Sigma_{j,\, j} - 4\Sigma_{j^c, j}^\top Q_j^{-1} \Sigma_{j^c, j} - \lambda, 0
    \right)\big)
\end{equation}
Applying this result iteratively yields the block coordinate ascent method detailed in Algorithm~\ref{alg:coordinate_ascent_log_barrier}.

\begin{algorithm}[t] 
    \caption{Coordinate ascent with log-barrier}\label{alg:coordinate_ascent_log_barrier}
    \begin{algorithmic}[1]
        \STATE \textbf{Input:} A covariance matrix $\Sigma\in\symm_p$, barrier coefficient $\lambda > 0$,
        decay $\mu < 1$, $s^{(0)} = 0 \in\reals^p$.
        \STATE Set $s = s^{(0)}$.
        \REPEAT
        \FOR{$j = 1,\,\dots, p$}
        \STATE Form $Q_j = 2\Sigma_{j^c,\, j^c} - \diag s_{j^c}$
        \STATE Compute $s_j = \min\big(1,\max\big( 2\Sigma_{j,\, j} - 4\Sigma_{j^c,\, j}^\top Q_j^{-1} \Sigma_{j^c,\, j} - \lambda,\, 0 \big)\big)$\label{alg:line:q_inv}
        \ENDFOR
        \STATE $\lambda = \mu \lambda$
        \UNTIL{stopping criteria}
        \STATE \textbf{Output:} A solution $s$.
    \end{algorithmic}
\end{algorithm}

\subsubsection{Iteration Complexity}
In Algorithm~\ref{alg:coordinate_ascent_log_barrier}, the bottleneck is the inversion of the matrix $Q_j \in \symm^+_{p-1}$ in line~\ref{alg:line:q_inv} which is $\cO (p^3)$, making the total complexity of Algorithm~\ref{alg:coordinate_ascent_log_barrier} $\cO (n_\text{iters}  p^4)$. We can however reduce the cost of Algorithm~\ref{alg:coordinate_ascent_log_barrier} to $\cO(n_\text{iters} p^3)$ by carefully updating $Q_j^{-1}$ between subsequent coordinates.

\begin{lemma} \label{lemma:low_rank_update}
Let $s\geq 0$ and $A = 2\Sigma - \diag (s)$. Then, for any $j \in [p]$, $Q_j^{-1}$ can be computed as the inverse of a rank-3 update on $A$.
\end{lemma}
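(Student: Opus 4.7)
The goal is to recover $Q_j^{-1} = (A_{j^c,j^c})^{-1}$ from $A^{-1}$ via a Sherman--Morrison--Woodbury update of rank $3$. The natural device is to replace the $j$-th row and column of $A$ by $e_j^\top$ and $e_j$ respectively, yielding an auxiliary $p \times p$ matrix $\bar A$. After a permutation that moves index $j$ to the last position, $\bar A$ is block diagonal with blocks $A_{j^c,j^c}$ and $1$, so $\bar A^{-1}$ is block diagonal with blocks $Q_j^{-1}$ and $1$. Hence the $(j^c,j^c)$ submatrix of $\bar A^{-1}$ is exactly the quantity we want.

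The next step is to write $\bar A - A$ as a symmetric rank-$3$ perturbation. Let $a = A_{:,j} = A e_j$. The matrix $\bar A$ differs from $A$ only on row $j$, column $j$, and entry $(j,j)$; adding $-e_j(a - e_j)^\top - (a - e_j)e_j^\top$ to $A$ zeroes out the off-diagonal entries of row/column $j$ but subtracts $2(A_{jj}-1)$ from the diagonal entry $(j,j)$, so one further term $(A_{jj} - 1)\, e_j e_j^\top$ restores the required value $1$. Concretely,
\[
    \bar A = A - e_j(a - e_j)^\top - (a - e_j)e_j^\top + (A_{jj}-1)\,e_j e_j^\top,
\]
which is a symmetric update of rank at most $3$. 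I would verify this identity by checking the three cases: $(j,k)$ with $k\neq j$, $(k,j)$ with $k\neq j$, and $(j,j)$.

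Finally, applying the Woodbury identity to $\bar A = A + U C U^\top$ (where $U \in \R^{p\times 3}$ stacks $e_j$, $a-e_j$, $e_j$ and $C \in \R^{3\times 3}$ encodes the signs) gives
\[
    \bar A^{-1} = A^{-1} - A^{-1} U \bigl(C^{-1} + U^\top A^{-1} U\bigr)^{-1} U^\top A^{-1},
\]
and $Q_j^{-1}$ is then read off as the $(j^c,j^c)$ block of $\bar A^{-1}$. The only delicate point is bookkeeping the diagonal correction $(A_{jj}-1)\,e_j e_j^\top$: this ensures the $(j,j)$ entry of $\bar A$ equals $1$ so that its inverse contributes $1$ to that slot of $\bar A^{-1}$ without perturbing the $(j^c,j^c)$ block. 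Once the rank-$3$ representation is in place, the Woodbury formula is standard and requires only $\cO(p^2)$ work per coordinate update, which is what the surrounding algorithmic bound of $\cO(p^3)$ per outer sweep relies on.
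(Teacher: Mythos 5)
Your proof is correct and follows essentially the same route as the paper: both zero out the $j$-th row and column of $A$ via a symmetric update written as three rank-one terms, observe that the resulting matrix is block diagonal (up to permutation) with blocks $Q_j$ and a scalar, and then invoke Sherman--Morrison--Woodbury to read $Q_j^{-1}$ off the $(j^c,j^c)$ block in $\cO(p^2)$ work. Your diagonal bookkeeping is in fact slightly more careful than the paper's displayed identity, whose $(j,j)$ corner comes out to $1-s_j$ rather than $1$ --- a harmless discrepancy since only the $(j^c,j^c)$ block matters.
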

\begin{proof}
Up to a permutation, we can assume without loss of generality that $j=1$. We can write
\[
\BPM
1 & 0\\
0 & Q_j^{-1}
\EPM
= \left(
A+e_je_j^T(1+2\Sigma_{jj})- 2 e_j\Sigma_j^T - 2 \Sigma_j e_j^T
\right)^{-1}
\]
where $e_j$ and $\Sigma_j$ is the $j^\mathrm{th}$ Euclidean basis vector and column of $\Sigma$ respectively.
\end{proof}

Using the Sherman-Woodbury-Morrisson (SWM) formula \citep{Golu90}
\BEQ\label{eq:swm}
(A+UV^T)^{-1}=A^{-1}-A^{-1}U(\idm+V^TA^{-1}U)^{-1}V^TA^{-1}.
\EEQ
updating $Q_j^{-1}$ has complexity $\cO(p^2)$. Note that $A$ enjoys a rank-1 modification when a coordinate of $s$ is updated. After the initial inversion of $\Sigma$, each update of $Q_j$ thus becomes an $\cO(p^2)$ operation and looping over all coordinates gives us a time complexity of $\cO(n_{\text{iters}} p^3)$. 

\subsubsection{Stable Updates}\label{sss:stable}
Despite this improvement in complexity, the biggest practical problem with the aforementioned scheme is the numerical instabilities present using the SWM formula~\citep{yip1986SM_unstable}. In order to circumvent this issue, we propose Algorithm \ref{alg:coordinate_ascent_log_barrier2} (see Section~\ref{ss:algs}) which is a modification of Algorithm  \ref{alg:coordinate_ascent_log_barrier} that uses Cholesky decompositions instead of matrix inversions.
The key step in  Algorithm \ref{alg:coordinate_ascent_log_barrier2} is a rank one update of $A = 2\Sigma - \diag (s)$ after updating a coordinate of $s$. Hence, given $A = LL^\top$, we can perform stable, rank one Cholesky updates on $A$ in $\cO(p^2)$ steps and solve a triangular system directly instead of inverting a matrix (see Section~\ref{s:stable_updates_cont} for more details). Hence, Algorithm \ref{alg:coordinate_ascent_log_barrier2} has the same worst-case complexity as Algorithm \ref{alg:coordinate_ascent_log_barrier}, but is both faster and more stable in practice. Despite this computational improvement, the complexity $\cO(n_{\text{iters}} p^3)$ is still prohibitive for large $p$. To make coordinate ascent scale, we assume in what follows that $\Sigma$ has a low-rank factor model structure (see Section~\ref{s:covest}) and adapt the method.

\subsection{Coordinate Ascent under Factor Model}\label{ss:sdp_low_rank}
The complexity of Algorithm~\ref{alg:coordinate_ascent_log_barrier} can be drastically reduced, from $\cO(n_{\text{iters}} p^3)$
to $\cO\left( n_\text{iters}  p  k^2 \right)$ assuming a low-rank factor model on $\Sigma$:
\BEQ\label{eq:factor-model}
\Sigma = D + UU^\top
\EEQ
where $D \succeq 0$ is a diagonal matrix, and $U \in \mathbb{R}^{p \times k}$ where $k \ll p$ (see Section~\ref{s:covest} for details on how efficiently estimate such a model). Under this assumption, for a given $j \in [p]$, using~\eqref{eq:swm}, we have
\begin{align}\label{eq:woodbury}
    Q_j^{-1} &= \left( 2D_{j^c,j^c} - \diag (s_{j^c}) + 2 U_{j^c,:} U_{j^c,:}^\top \right)^{-1} \nonumber \\
    &= \left( \tildeD_{j} + 2 U_{j^c,:} U_{j^c,:}^\top \right)^{-1} \nonumber \\
    &= \tildeD_{j}^{-1}  - 2 \tildeD_{j}^{-1}U_{j^c,:} \left( I_k + 2 U_{j^c,:}^\top \tildeD_{j}^{-1} U_{j^c,:} \right)^{-1} U_{j^c,:}^\top \tildeD_{j}^{-1}
\end{align}
where $\tildeD_j = 2D_{j^c,j^c} - \diag (s_{j^c})$. The computational gain comes from inverting a $k \times k$ matrix and diagonal matrix $\tildeD_j$ in \eqref{eq:woodbury} as opposed to a $(p-1) \times (p-1)$ matrix. Recall that at each iteration~$j$, only the $j$th coordinate of $s$ is updated with
\begin{equation*}
    s_j \leftarrow \min(1,\max(\alpha^\star,\, 0))
    \quad\text{where}\quad
    \alpha^\star = 2\cove_{j,\, j} - 4\cove_{j^c,\, j}^\top Q_j^{-1} \cove_{j^c,\, j} - \lambda
\end{equation*}
Using \eqref{eq:woodbury}, and the fact that under the factor model assumption $\Sigma_{j^c,j} = U_{j^c,:} U_{j,:}^\top$, we have
\BEQ\label{eq:decomp}
    \alpha^\star = \underbrace{2 \Sigma_{j,j} - 4 U_{j,:} M_j U_{j,:}^\top - \lambda}_\text{$(*)$} ~ + ~ \underbrace{8 U_{j,:} M_j (I_k + 2M_j)^{-1} M_j U_{j,:}^\top}_\text{$(**)$}
\EEQ
where $M_j = U_{j^c,:}^\top \tildeD_j^{-1} U_{j^c,:} \in \mathbb{R}^{k \times k}$. Forming $M_j$ directly costs $\cO(p k^2)$ but we can take advantage of the structure of $M_j$ to compute $(*)$ and $(**)$ efficiently (see Section~\ref{ss:ascent_factor_model_cont} for further details). One nuance to using the SWM formula in this way is the fact that $\tilde{D}_j$ can be nearly singular. In theory, this would preclude solving the SDP to arbitrary accuracy. In practice, this does not seem to be problem as numerical instabilities rarely occur (see Section \ref{s:numres}).

\section{Sampling Knockoffs}\label{s:sampling}
In this section, we detail how to generate the knockoff matrix $\tX \in \R^{p \times n}$
once an optimal solution $s$ to the semidefinite program~\eqref{eq:sdp} has been found. Each column $\tilde{x}_i$ is sampled according to the Gaussian conditional distribution in~\eqref{eq:conditional_gaussian_knockoffs}. This means sampling $\tilde{x}_i \mid x_i \sim \cN\left( \mu_i,\Omega \right)$ such that 
\begin{equation*}
    \mu_i = x_i - \diag(s) \Sigma^{-1} x_i
    \quad\mbox{and}
    \quad
    \Omega = 2\diag (s) - \diag (s) \Sigma^{-1} \diag(s).
\end{equation*}
Naively sampling from $\mathcal{N}(\mu_i,\Omega)$ via $\tilde{x}_i = \mu_i + L v$ where $v \sim \cN(0,\idm_p)$ and where $L$ satisfies $LL^\top = \Omega$ has complexity $\cO(p^3)$ (the cost associated with the Cholesky decomposition).

Suppose now that $\Sigma$ has a factor model structure as in~\eqref{eq:factor-model}; that is, $\Sigma = D + UU^\top$ where $D \succ 0$ is a diagonal matrix and $U \in \mathbb{R}^{p \times k}$ (with $k \ll p$).
We show how to factorize $\Omega$ and sample the knockoff matrix $\tX$ in $\cO\left( n  p  k^2 \right)$ steps using $\cO\left(p (n + k)\right)$ memory.
Using the factor model assumption and the SWM formula~\eqref{eq:swm}, we have
\begin{align*}
    \cove^{-1} &= D^{-1} - D^{-1} U N N^\top U^\top D^{-1}
\end{align*}
where $N \in \R^{k \times k}$ is the Cholesky factorization of
$(\idm_k + U^\top D^{-1} U)^{-1}$. Setting $S = \diag (s)$ gives
\begin{align*}
    \Omega &= 2S - S \cove^{-1} S = C + ZZ^\top
\end{align*}
where $C = 2S - SD^{-1}S$ is diagonal (but not necessarily psd) and $Z = S D^{-1} U N \in \R^{p \times k}$ is low-rank.
Forming $C$ and $Z$ takes at most $\cO\left(pk^2\right)$ operations and $\cO\left(pk\right)$ memory.
Notice that the mean $\mu_i$ is easily computed in $\cO\left( n p k^2 \right)$ operations
and without additional memory as follows
\begin{equation*}
    \mu_i = x_i - \cove^{-1} S x_i
    = x_i - D^{-1} S x_i + D^{-1} U N Z^\top x_i
\end{equation*}
For this reason, the problem reduces to sampling from $\cN\left(0,\Omega\right)$ efficiently.
To do so, we adopt the $L \Delta L^\top$ factorization procedure presented by~\cite{smola2004ldl},
which means decomposing $\Omega$ in the following way
\begin{equation}\label{eq:omega_ldlt}
    \Omega = C + Z Z^\top = L\left( Z,\, B \right)\; \Delta \; L\left( Z,\, B \right)^\top
\end{equation}
where $L\left( Z,\, B \right)$ (denoted $L$ in the sequel) has the following structure
\begin{align*}
    L\left( Z,\, B \right) &= \begin{pmatrix}
        1 \\
        \langle z_2', b_1' \rangle & 1 \\
        \vdots & & \ddots \\
        \langle z_p', b_1' \rangle & \dots & \langle z_p', b_{p - 1}' \rangle & 1
    \end{pmatrix}
\end{align*}
Here $z_i',b_i'$ denote the $i^\mathrm{th}$ row of $Z$ and $B$ respectively,
with $B \in \R^{p \times k}$, and $\Delta \in \symm_p^+$ is diagonal. \cite{smola2004ldl} detail how to construct $B$ and $\Delta$ (see Algorithm~\ref{alg:form_B} in  Appendix~\ref{appendix_alg}).

With a sample $v$ from $\cN\left(0,\idm_p\right)$, $\tilde{x}_i$ can be computed by setting $\tilde{x}_i = \mu_i + L \sqrt{\Delta}~v$.
The advantages of using the $L \Delta L^\top$ decomposition are that
\textit{(i)} we do not require $C \succeq 0$ and
\textit{(ii)} we never have to store the full matrices $L$ or $B$ to compute the product $L \sqrt{\Delta}~v$.
By virtue of the specific structure of $L$, its multiplication by a vector can be done in only $\cO(pk)$ operations (see Algorithm~\ref{alg:sampling_cholesky} in Appendix~\ref{appendix_alg}).
As $\Delta$ is diagonal, $\tilde{x}_i$ can be computed in only $\cO\left( p k \right)$ steps.
In practice, we derive an iterative procedure that never stores $B$ nor $L$ in memory to compute $\tilde{x}_i$.
Instead, their rows are computed on the fly and requires only $\cO(p + k^2)$ memory (see Algorithm~\ref{alg:merged_sampling} in Appendix~\ref{appendix_alg}). Finally, $n$ columns $\tilde{x}_i$ have to be sampled to form $\tilde{X} \in \R^{p \times n}$,
which requires $\cO\left( n p k^2 \right)$ steps and $\cO\left(p (n + k) \right)$ memory.

\section{Numerical Results}\label{s:numres}
All experiments utilized a standard workstation. For the plots below, all error bars represent one standard deviation.
Unless referring to our algorithms, all other functions used were from \texttt{Scikit-Learn}~\citep{scikit-learn}. For more details on experimental set up, see Appendix \ref{appendix_exp}.

\subsection{Benchmarks}
We first generate random covariance matrices and compare CPU time and quality of solutions in solving \eqref{eq:sdp} using \texttt{SCS} (a first order method) and \texttt{CVXOPT} (an IPM) interfaced with \texttt{cvxpy} \citep{o2016conic, andersen2011interior, diamond2016cvxpy} and solving \eqref{eq:sdp_log_barrier} using coordinate ascent. We set $\Sigma = D + V \Lambda V^\top$ where $D \in \mathbb{R}^{p \times p}, V \in \mathbb{R}^{p \times k}, \Lambda \in \mathbb{R}^{k \times k}$ where $D = 10^{-3} I_{p}, \;\; \Lambda_{ii}\sim U[0,1],\;\; V_{ij} \sim \mathcal{N}(0,1)$, and $k = \lceil 0.05p \rceil$. Figure \ref{fig:speed} shows the results of the experiment for increasing $p$ and the optimality of the generated solution (see  Figure \ref{fig:feas} in Section \ref{ss:benchmark_details} for the feasibility of the solution generated by coordinate ascent against a baseline).

\begin{figure}[h] 
  \centering
  \begin{minipage}[b]{0.49\textwidth}
    \includegraphics[width=\textwidth]{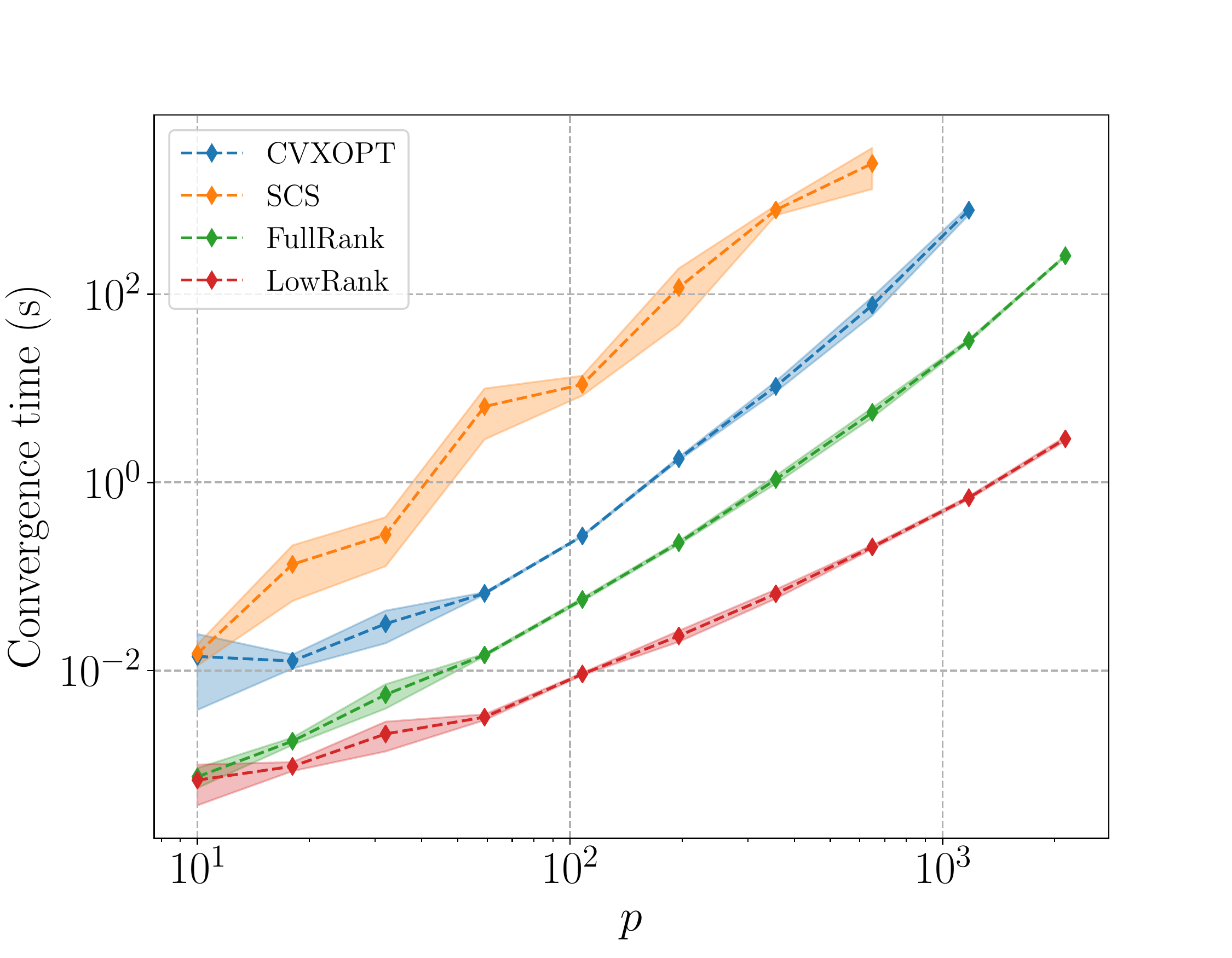}
  \end{minipage}
  \hfill
    \begin{minipage}[b]{0.49\textwidth}
    \includegraphics[width=\textwidth]{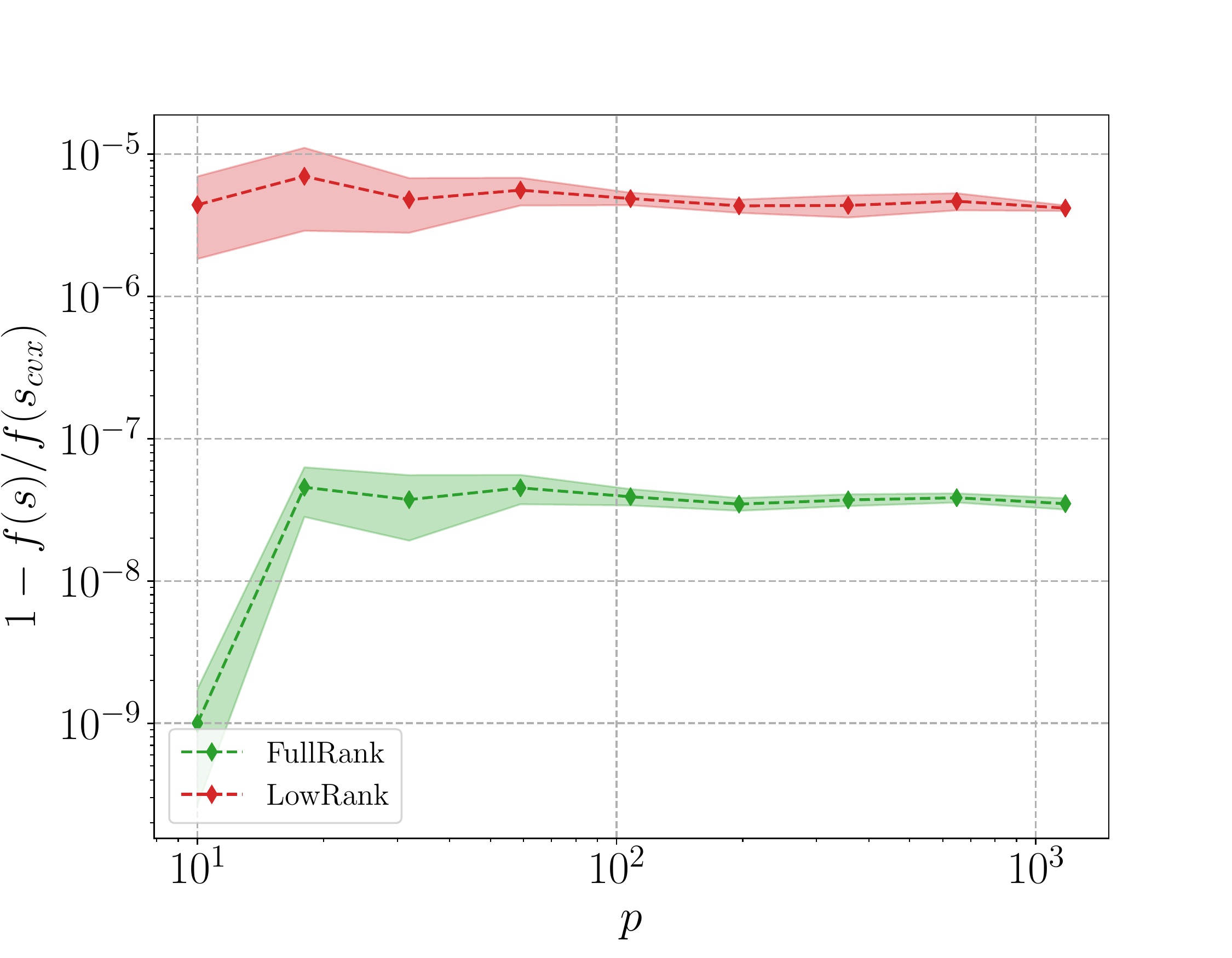}
  \end{minipage}
  \caption{(\textit{Left}) Convergence time versus dimension for solving \eqref{eq:sdp} using \texttt{CVXOPT} and \texttt{SCS} in \texttt{cvxpy} and using Algorithms \ref{alg:coordinate_ascent_log_barrier2} and \ref{alg:low_rank_coordinate_ascent} to solve \eqref{eq:sdp_log_barrier}. 
  (\textit{Right}) Objective values reached by the full and low rank algorithms relative to that generated using IPMs. Here $f(s) = \ones^\top s$.}\label{fig:speed}
\end{figure}

In Figure \ref{fig:speed}, coordinate ascent provides substantial computational gains compared to using \texttt{SCS} or \texttt{CVXOPT}. Solving the full rank model is consistently one (resp. two) orders of magnitude faster than \texttt{CVXOPT} (resp. \texttt{SCS}) and the low rank model for $p=500$ is four orders of magnitude faster than \texttt{SCS}. The slopes also indicate that for larger $p$, \texttt{SCS} and \texttt{CVXOPT} become prohibitively slow while the low rank model can comfortably handle $p \sim 10^5$. The right panel in Figure \ref{fig:speed} shows that the solution computed by our solver is indeed close to the \texttt{CVXOPT} solution (\texttt{SCS} produced infeasible solutions, see Section \ref{ss:benchmark_details}).

\subsection{Complexity}

We now check empirically the complexity bounds of Algorithm~\ref{alg:low_rank_coordinate_ascent} derived in Section~\ref{ss:sdp_low_rank} (under the factor model assumption). We focus on the time spent per cycle of the for loop in Algorithm \ref{alg:low_rank_coordinate_ascent}. We run two sets of experiments: one where we fix $k$ and increase $p$ and another where we fix $p$ and increase $k$. For both experiments, we generate covariance matrices as above. The results are plotted in Figure \ref{fig:comp}. This shows a favorable linear rate when $k \in [10^1, 10^2]$ and the theoretically derived quadratic rate when $k \in [10^2,10^3]$.\\

We also benchmark the complexity of Algorithm~\ref{alg:merged_sampling} to sample from $\cN(0, \Omega)$ when $\Omega = C + ZZ^\top$ where $C \in \R^{p \times p}$ is diagonal and PSD and $Z \in \R^{p \times k}$. In Figure~\ref{fig:comp} we compare it to the classical approach of computing the Cholesky factorization of $\Omega$ (note we use a plain python implementation for our algorithm). As seen in Figure~\ref{fig:comp}, Algorithm \ref{alg:merged_sampling} enjoys a linear dependence on $p$, a favorable (sub)linear rate when $k \in [1,10^2]$ and a quadratic dependence on $k$ when $k \in [10^2,10^3]$.
\begin{figure}[h] 
  \centering
    \begin{minipage}[b]{0.49\textwidth}
    \includegraphics[width=\textwidth]{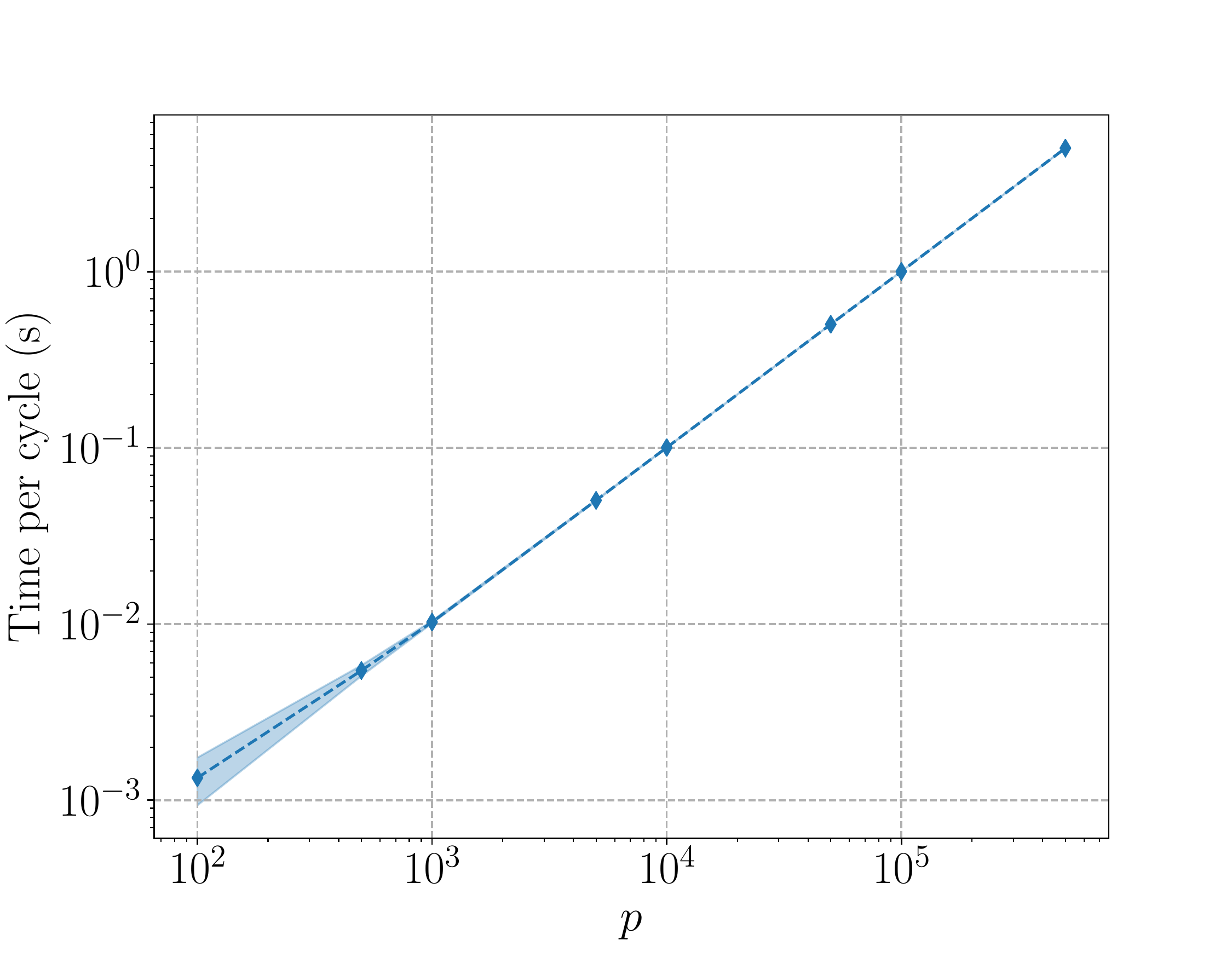}
  \end{minipage}
  \hfill
  \begin{minipage}[b]{0.49\textwidth}
    \includegraphics[width=\textwidth]{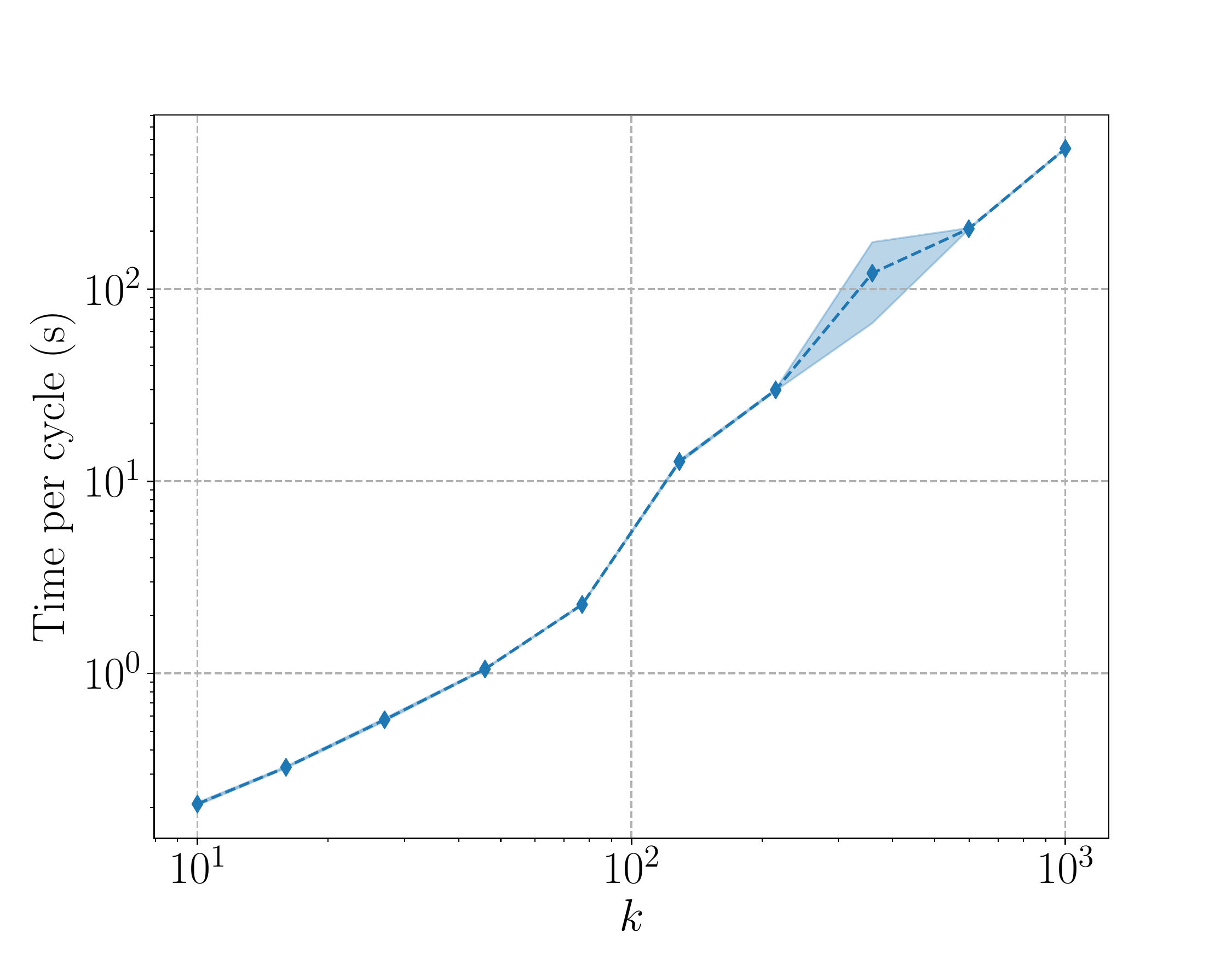}
  \end{minipage}
  \begin{minipage}[b]{0.49\textwidth}
    \includegraphics[width=\textwidth]{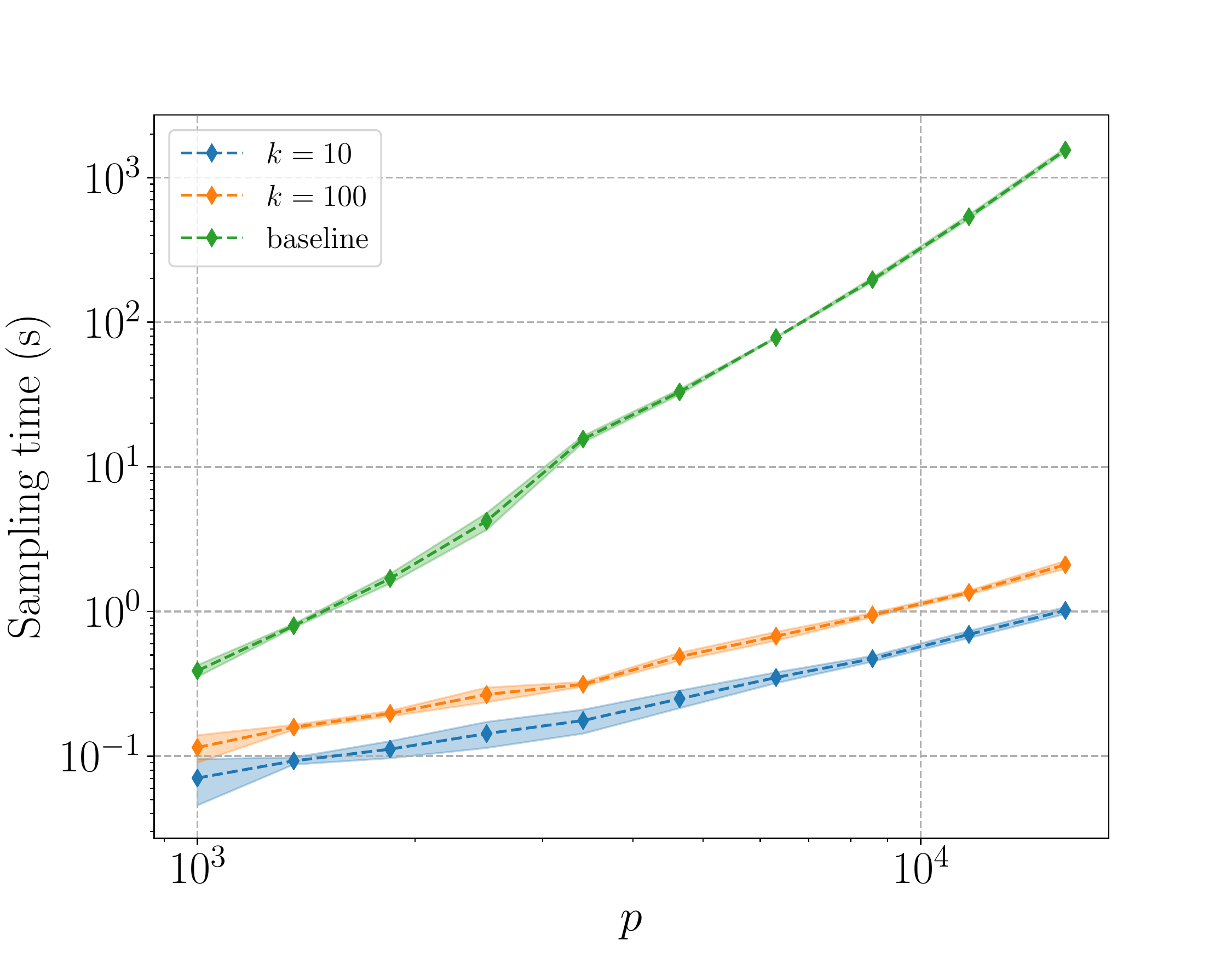}
  \end{minipage}
  \hfill
    \begin{minipage}[b]{0.49\textwidth}
    \includegraphics[width=\textwidth]{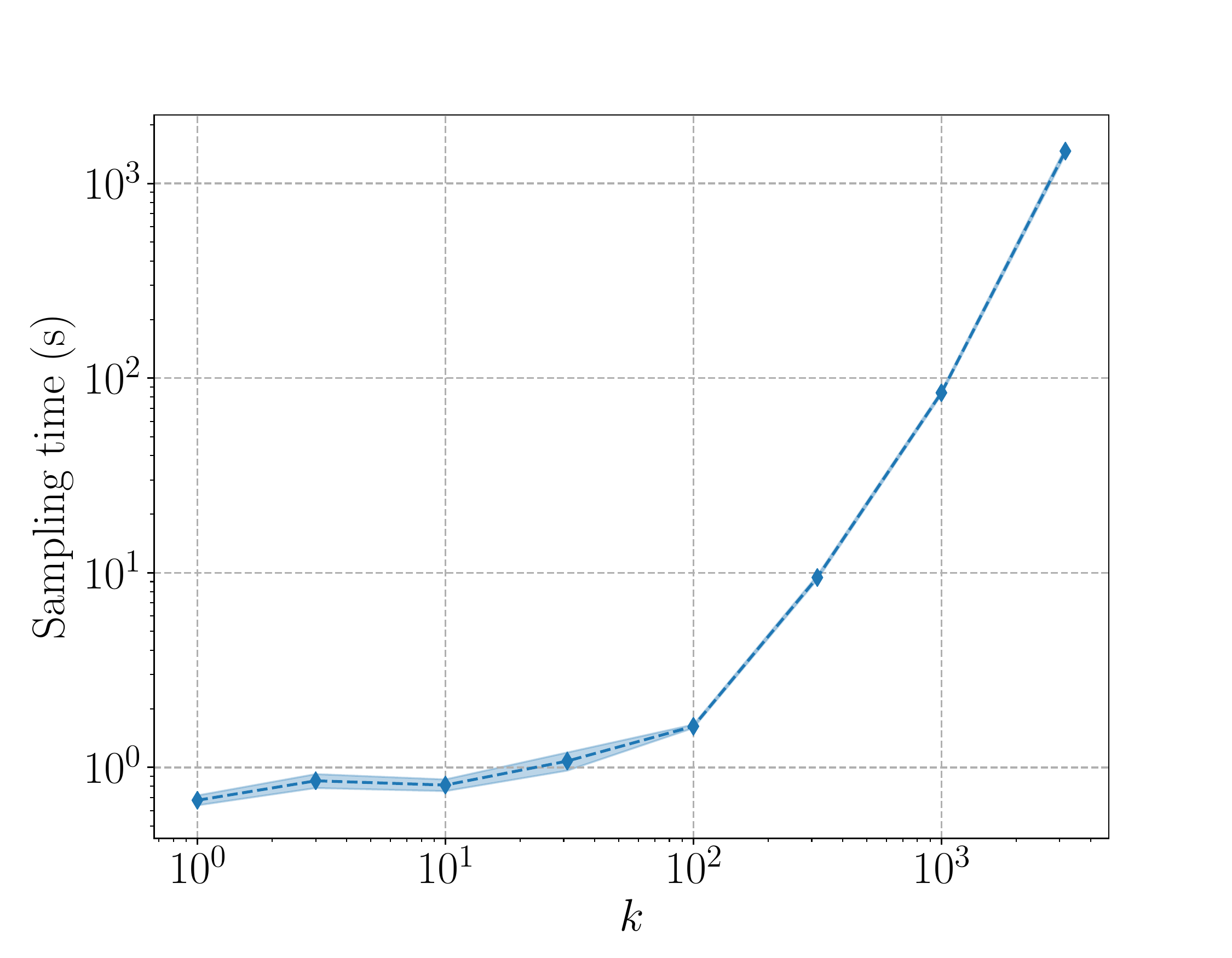}
  \end{minipage}
  \caption{ (\textit{Top Left})  Time per cycle versus dimension $p$ with $k=25$. This shows a linear dependence on $p$. (\textit{Top Right}) Time per cycle versus $k$ with $p=50,000$. \textit{(Bottom Left)} Sampling time versus dimension. \textit{(Bottom Right)} Sampling time versus rank for $p=25,000$.
  }\label{fig:comp}
  \vspace{-3mm}
\end{figure}

\subsection{FDR Control on Synthetic Data}
We now compare FDR control and power using different methods of solving \eqref{eq:sdp} at scale. For computational reasons, the two main current methods for constructing knockoffs in high dimension either use an equicorrelated (Equi) construction or an approximate semidefinite program (ASDP) construction (for more details see \citep[Section 3.4.2]{Cand18}). The Equi and ASDP constructions are approximations to the solution of \eqref{eq:sdp} and in this experiment, we compare the quality of knockoffs (measured via false discovery rate and power) generated using the above methods with the knockoffs generated via coordinate ascent, in the full rank and factor model settings.

We run a similar experiment to that in Figure 5 of \citep{Cand18}. We generate $\Sigma$ with $\Sigma = D + VV^\top$ where $D_{ii} \sim U[0,1]$ and $V_{ij} \sim \mathcal{N}(0, 1/k)$. We then generate $X \in \mathbb{R}^{p \times n}$ where the $i^\mathrm{th}$ column of $X$ is generated according to $x_i \sim \mathcal{N}(\textbf{0}_p, \Sigma)$. We then set $y = X^\top \beta + \epsilon$ where $\epsilon_i \sim \mathcal{N}(0,1)$ and $\beta$ has a fixed number of nonzero regression coefficients each having equal magnitudes and random signs. We then estimate a factor model from the empirical covariance (see Section \ref{s:covest} in Appendix \ref{appendix_alg}) with rank equal to $k$, solve the appropriate SDP, sample the knockoffs 100 times and finally compare the FDR and power of the various methods in Figure \ref{fig:synth1}. The target FDR rate is set to 10\%. The results of Figure \ref{fig:synth1} confirm the fundamental trade off between maximizing power and minimizing FDR -- if the FDR is very low, we do not expect the method to have much power. However, since the knockoff procedure simply provides a bound on the FDR, we are interested in comparing which procedure provides the most power. We observe in Figure~\ref{fig:synth1} that the approximate solutions produced using Equi and ASDP constructions tend to be more conservative in their FDR control (which is well below the 10\% target) and often have significantly less power than the optimal full and low rank SDP solutions. Overall, these optimal SDP solutions have an empirical FDR closer to the target (sometimes marginally above due to model estimation error) and exhibit more power, probably because the knockoffs are less correlated. Surprisingly, the low rank solutions have more power than the full rank ones even when their FDR match, which might be explained by the implicit regularization effect of the low rank structure.

\begin{figure}[h!]
  \centering
  \begin{minipage}[b]{0.49\textwidth}
    \includegraphics[width=\textwidth]{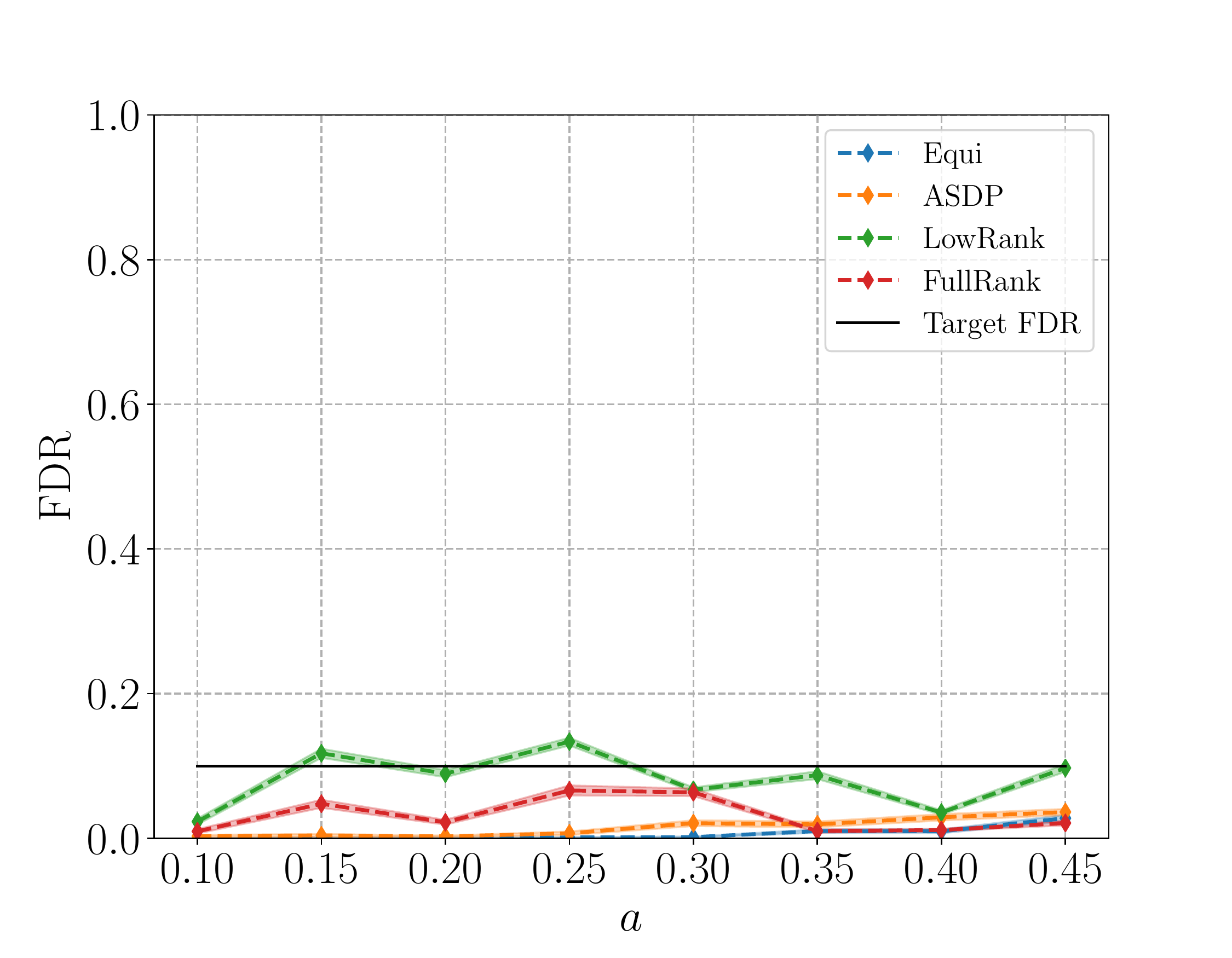}
  \end{minipage}
  \hfill
  \begin{minipage}[b]{0.49\textwidth}
    \includegraphics[width=\textwidth]{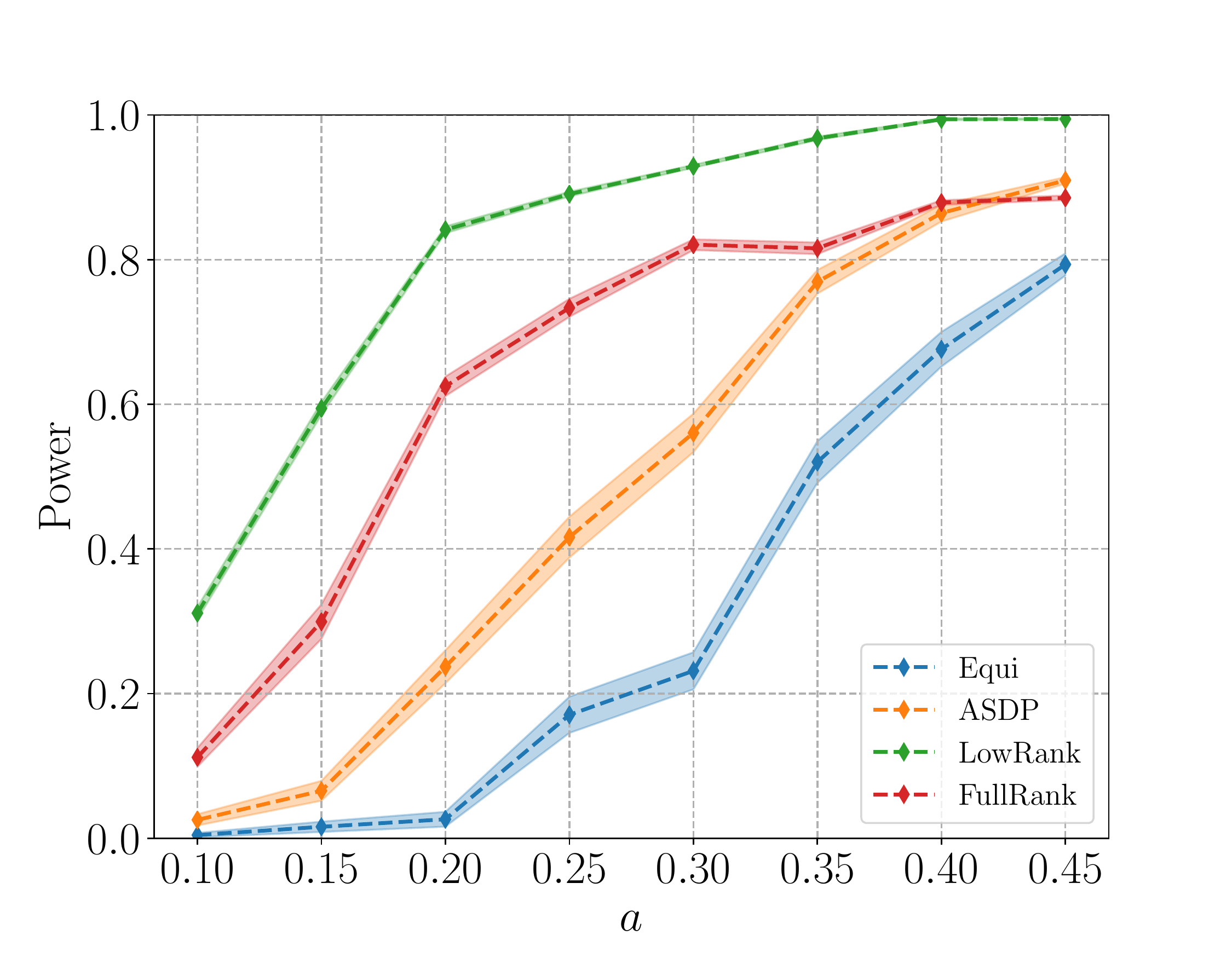}
  \end{minipage}
  \caption{$(n,p,k) = (1000,500,50)$, $\|\beta\|_0 = 50$ and each entry has equal amplitude. Each point represents 100 trials (the same $X$ and $\beta$ is used for each amplitude; the randomness is over the knockoff sampling) (\textit{Left}) FDP versus amplitude. (\textit{Right}) Power versus amplitude. }
  \label{fig:synth1}
\end{figure}

\subsection{fMRI feature selection}

We now test the low-rank factor model on the Human Connectome Project (HCP)~\citep{hcp_data} dataset for feature selection. Composed of brain connectivity maps, the dataset contains brain activity from $1,496$ healthy patients that was measured while they were shown pictures containing either humans faces or geometric shapes. We derive a binary classification task from the fMRI data consisting of identifying which pictures were shown to each patient given their brain activity. More specifically, we apply the knockoff filter to find which regions of the brain are the most discriminative for classification. Since fMRI data is by nature very noisy and extremely high dimensional,
we first perform a spacial clustering step resulting in $p = 5,000$ components. The factor model is then computed for the shrunk (Ledoit-Wolf) covariance matrix (see Section~\ref{s:covest}) with $k = 50$. Estimating the factor model, solving \eqref{eq:sdp_log_barrier}, sampling knockoffs and computing the covariates statistics takes roughly 20 seconds.
In this experiment, we make use of statistics derived from sparse centroids classifiers~\citep{centroids} which we found to be more effective than the LCD statistic~\citep{Cand18} for this classification task (see Section~\ref{ss:fmri_details} for further details). Figure~\ref{fig:fmri} shows the brain regions that were selected with a FDR target of $10\%$.
\begin{figure}[h!]
  \centering
  \begin{minipage}[b]{0.65\textwidth}
    \includegraphics[width=\textwidth]{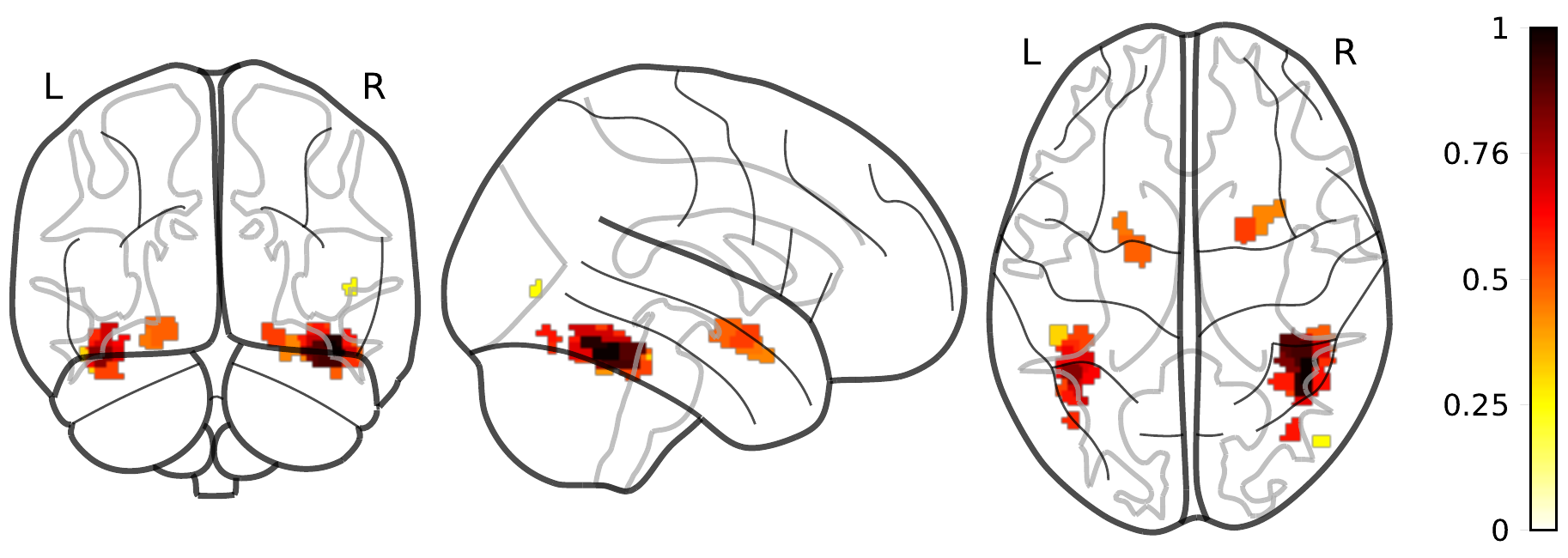}
  \end{minipage}
  \caption{
      Discoveries (34 in total) and their weights obtained by applying the knockoff filter with the low-rank factor model. In comparison, Equi-knockoffs did not result in any discoveries.
  }
  \label{fig:fmri}
\end{figure}
We cannot evaluate power or FDR here since the ground truth is not known. Note however that the discoveries are quite symmetric and concentrated in a few locations. Since the results were obtained without combining knockoffs with any additional structured penalty constraint to enforce localization or symmetry, this suggests that the features are indeed meaningful.

\section{Conclusion}
In this paper, we propose a computationally efficient method for computing Gaussian \textit{model-X} knockoffs.  For generic covariance matrices, our method scales as $\cO(p^3)$ and when we have a factor model assumption on the covariance matrix we are able to reduce the complexity down to $\cO(pk^2)$. We also provide computationally efficient methods for performing a factor model decomposition as well as sampling knockoffs. We validate our complexities empirically, compare the power/FDR of different knockoff generation methods on synthetic data, and qualitatively show the features selected by our procedure on fMRI data.


\section*{Acknowledgements}
We would like to thank the PARIETAL team (Inria-CEA) for sharing their fMRI data and helping preprocessing it. A.A. is at the d\'epartement d'informatique de l'ENS, \'Ecole normale sup\'erieure, UMR CNRS 8548, PSL Research University, 75005 Paris, France, and INRIA Sierra project-team. AA would like to acknowledge support from the {\em ML and Optimisation} joint research initiative with the {\em fonds AXA pour la recherche} and Kamet Ventures, a Google focused award, as well as funding by the French government under management of Agence Nationale de la Recherche as part of the "Investissements d'avenir" program, reference ANR-19-P3IA-0001 (PRAIRIE 3IA Institute).

{\bibliographystyle{plainnat}
\bibliography{ref.bib,MainPerso}}

\newpage 
\appendix

\section{Supplementary Material}\label{appendix_alg}

\subsection{Algorithms}
\subsubsection{Solving the SDP}\label{ss:algs}

We now fully spell out the various algorithms described in the text. The algorithms described in Sections \ref{sss:stable} and \ref{ss:sdp_low_rank} are detailed in Algorithm \eqref{alg:coordinate_ascent_log_barrier2} and Algorithm \eqref{alg:low_rank_coordinate_ascent} respectively.

\begin{algorithm}[H] 
    \caption{Stable coordinate ascent}\label{alg:coordinate_ascent_log_barrier2}
    \begin{algorithmic}[1]
        \STATE \textbf{Input:} $\Sigma$, barrier coefficient $\lambda > 0$,
        decay $\mu < 1$, $\s^{(0)} = \0_p$
        \STATE $\s = \s^{(0)}$
        \STATE $L = \call{Cholesky}{2 \Sigma - \diag\s}$
        \REPEAT
        \FOR{$j = 1,\,\dots,\, p$}
        \STATE Construct $\tilde{y}$ with $\tilde{y}_j = 0$ and $\tilde{y}_{j^c} = 2\Sigma_{j^c,j}$
        \STATE Solve $L x = \tilde{y}$
        \STATE $\zeta = 2 \Sigma_{j, j} - s_j$
        \STATE $c = \frac{\xi \|x\|_2^2}{\xi + \|x\|_2^2}$
        \STATE $s_j = \min\big(1,\max\big( 2\Sigma_{j,\, j} - c - \lambda,\, 0 \big)\big)$
        \STATE $\call{CholeskyUpdate}{L}$
        \ENDFOR
        \STATE $\lambda = \mu \lambda$
        \UNTIL{stopping criteria}
    \end{algorithmic}
\end{algorithm}

\begin{algorithm}[H]
    \caption{Coordinate Ascent using Factor Model}\label{alg:low_rank_coordinate_ascent}
    \begin{algorithmic}[1]
        \STATE \textbf{Input:} approximation $\cove = D + U U^\top$, barrier coefficient $\lambda > 0$,
        decay $\mu < 1$, $s^{(0)} = 0_p$
        \STATE $s = s^{(0)}$
        \STATE $M = U^\top \left( 2D - \diag\s \right)^{-1} U$
        \STATE $Q, R \leftarrow \call{DecomposeQR}{I_k + 2M}$
        \REPEAT
        \FOR{$j = 1,\,\dots,\, p$}
        \STATE $z \leftarrow \text{$j$th column of $U^\top$}$
        \STATE $\kappa \leftarrow \left( s_j - 2 D_{j,j} \right)^{-1}$
        \STATE $\call{UpdateQR}{Q, R, 2 \kappa, z}$
        \STATE $y \leftarrow \frac{Q R z - z}{2}$
        \STATE Solve $R x = Q^\top y$
        
        \STATE  $\alpha^\star = 2 \Sigma_{j,j} - 4z^\top y - \lambda + 8 y^\top x$
        \STATE $\s_j \leftarrow \min(1,\max(\alpha^\star,0))$
        \STATE $\kappa \leftarrow \left( 2 D_{j,j} - s_j \right)^{-1}$
        \STATE $\call{UpdateQR}{Q, R, 2 \kappa, z}$
        \ENDFOR
        \STATE $\lambda = \mu  \lambda$
        \UNTIL{stopping criteria}
    \end{algorithmic}
\end{algorithm}

\subsubsection{Stable updates (continued)} \label{s:stable_updates_cont}
Here, we outline the details of Algorithm \eqref{alg:coordinate_ascent_log_barrier2} which is an efficient version of Algorithm~\ref{alg:coordinate_ascent_log_barrier}. The key idea is to keep a Cholesky factorization of $2\Sigma - \diag s$ at any time. For a given index $j\in\{1,\hdots,p\}$ let $\tilde{y}_i\in\reals^p$, with
\begin{align*}
    \tilde{y}_i = \begin{cases}
    2\Sigma_{i,j} & \text{if } i\not = j \\
    0 & \text{otherwise}
    \end{cases}
\end{align*}
that is $\tilde{y}$ is the $j$th column of $2\Sigma$ with the $j^\mathrm{th}$ entry set to zero. Furthermore, let $x$ be the solution of the system $Lx = \tilde{y}$, we claim that we can compute $s^\star_j$ in the optimality condition~\eqref{eq:opt_cond} from $x$, with
\begin{align}\label{eq:quadratic_equality}
    4\Sigma_{j^c,j}^\top Q_j^{-1} \Sigma_{j^c,j} = \dfrac{\zeta \|x\|_2^2}{\zeta + \|x\|_2^2}
\end{align}
where $\zeta = 2\Sigma_{j,j} - s_j$. Note that computing $x$ given $L$ amounts to forward substitution and requires $\cO(p^2)$ steps. To prove \eqref{eq:quadratic_equality}, we can assume, up to a permutation and without loss of generality that $j=p$, so that
\begin{align}\label{eq:cholesky_system}
    \|x\|_2^2 &= \tilde{y}^\top (LL^\top)^{-1} \tilde{y} 
    = \begin{bmatrix}
    2\Sigma_{j^c,j} \\
    0
    \end{bmatrix}^\top A^{-1} \begin{bmatrix}
    2\Sigma_{j^c,j} \\
    0
    \end{bmatrix}  
\end{align}
where 
\begin{align*}
    A &= \begin{bmatrix}
    Q_j & 2\Sigma_{j^c,j}^\top\\
    2\Sigma_{j^c,j} & \zeta
    \end{bmatrix}, \;\;\;\;\;
    A^{-1} = \begin{bmatrix}
        B & * \\
        * & *
    \end{bmatrix}
\end{align*}
and inverse of $A$ has the block structure given above where $B = Q_j^{-1} + \frac{4}{\beta} Q_j^{-1} (\Sigma_{j^c,j} \Sigma_{j^c,j}^\top) Q_j^{-1}$ and $\beta = \zeta - 4\Sigma_{j^c,j}^\top Q_j^{-1} \Sigma_{j^c,j}$. Plugging this into \eqref{eq:cholesky_system} and simplifying, we arrive at
\begin{align*}
    \|x\|_2^2 = 4\Sigma_{j^c,j}^\top Q_j^{-1} \Sigma_{j^c,j} + \dfrac{4(2\Sigma_{j^c,j}^\top Q_j^{-1} \Sigma_{j^c,j})^2}{\zeta - 4\Sigma_{j^c,j}^\top Q_j^{-1} \Sigma_{j^c,j}}
\end{align*}
which yields \eqref{eq:quadratic_equality}. After computing $s_j^\star$, we perform a rank one Cholesky update of $L$ to maintain the equality $LL^\top = 2\Sigma - \diag s$.

\subsubsection{Coordinate Ascent under Factor Model (continued)} \label{ss:ascent_factor_model_cont}


The factor model assumption allows us to now solve a $k \times k$ linear system as opposed to a $(p - 1) \times (p - 1)$ linear system.
Remember that the update is written as $s_j \leftarrow \min(1,\max(\alpha^\star,\, 0))$ where
\BEQ\label{eq:decomp2}
    \alpha^\star = \underbrace{2 \Sigma_{j,j} - 4 U_{j,:} M_j U_{j,:}^\top - \lambda}_\text{$(*)$} ~ + ~ \underbrace{8 U_{j,:} M_j (I_k + 2M_j)^{-1} M_j U_{j,:}^\top}_\text{$(**)$}
\EEQ
We must be efficient in computing $M_j$ since directly forming it costs  $\cO(p  k^2)$ operations. To this end, let 
\begin{equation*}
M = U^\top \left( 2D - \diag(s) \right)^{-1} U
\end{equation*}
and notice that 
$M_j = M - (2D_{j,j} - s_j)^{-1} U_{j,:}^\top U_{j,:}$ is a rank one update of $M$, while an update of the $j^\mathrm{th}$ coordinate of $s$ is also a rank one update of $M$.
This means that we can efficiently compute $\alpha^\star$ by performing successive rank one updates on $k \times k$ matrices at each iteration. Indeed, suppose that we have a $QR$ decomposition of $\idm_k + 2M$. Using a rank one update of complexity $\cO\left( k^2 \right)$, we can get the following decomposition
\begin{equation*}
    Q^\prime R^\prime = \idm_k + 2M_j
\end{equation*}
From these factors, we get $2(Q^\prime R^\prime U_{j,:}^\top - U_{j,:}^\top) = 4 M_j U_{j,:}^\top$, hence the term $(*)$ in \eqref{eq:decomp2}.

The term $(**)$ involves computing the inverse of $\idm_k + 2M_j$. Using the $Q'R'$ factorization again, we solve for $x$ in the triangular system $R^\prime x = {Q^\prime}^\top M_j U_{j,:}^\top$, then form $U_{j,:} M_j x$. Finally, after $s_j$ has been updated, we perform a rank one update  on the $QR$ decomposition of $\idm_k + 2M$. The algorithm taking advantage of the factor model structure is summarized in Algorithm \ref{alg:low_rank_coordinate_ascent}.

\subsection{Sampling knockoffs (continued)}

In this section, we detail the efficient knockoffs sampling algorithms mentioned in Section~\ref{s:sampling}.

\paragraph{Forming $B$ and $\Delta$.} Given a diagonal plus low-rank covariance $\Omega = C + Z Z^\top$,
Algorithm~\ref{alg:form_B} (from~\cite{smola2004ldl}) forms the matrices $B \in \R^{p \times k}$ and $\Delta \in \R^{p \times p}$ such that $\Omega = L\left(Z,B\right)\; \Delta \; L\left(Z,B\right)^\top$.
It requires $\cO(pk)$ steps and $\cO(pk)$ additional memory (if only the diagonal of $\Delta$ is stored).
\begin{algorithm}[H]
    \caption{Forming the Cholesky factorization matrices $B$ and $\Delta$}\label{alg:form_B}
    \begin{algorithmic}[1]
        \STATE \textbf{Input:} $\Omega = C + Z Z^\top$
        \STATE $M = \idm_k,\, B = 0$
        \FOR{$j = 1,\,\dots,\, p$}
        \STATE $t = M z_j$
        \STATE $\Delta_{j,\,j} = C_{j,\, j} + z_j^\top t$ \COMMENT{always non-negative}
        \IF{$\Delta_{j,\,j} > 0$}
            \STATE $b_j = t / \Delta_{j,\,j}$
            \STATE $M = M - t t^\top / \Delta_{j,\,j}$
        \ELSE
            \STATE $b_j = 0$ \COMMENT{$b_j$ may be anything, choose 0 for simplicity}
        \ENDIF
        \ENDFOR
        \STATE \textbf{Output: } $\Delta$ and $B$ as in \eqref{eq:omega_ldlt}.
    \end{algorithmic}
\end{algorithm}

\paragraph{Fast multiplication.} Next, given the matrices $B, \Delta$ and a vector $v \in \R^p$, Algorithm~\ref{alg:sampling_cholesky} computes the product $u = L\left(Z,B\right) \Delta v$ in only $\cO(pk)$ operations (instead of the $\cO\left( p^2 \right)$ normally required for a matrix-vector product) and $\cO(p + k)$ memory.
\begin{algorithm}[H]
    \caption{Fast Cholesky multiplication}\label{alg:sampling_cholesky}
    \begin{algorithmic}[1]
        \STATE \textbf{Input:} $B,\Delta$ such that $\Omega = L\left(Z,B\right)\; \Delta \; L\left(Z,B\right)^\top$ and a vector $v \in \R^{p}$
        \STATE $w = \0_k$
        \FOR{$j = 1,\,\dots,\, p$}
            \STATE $u_j = \sqrt{\Delta_{j,\,j}} v_j + z_j^\top w$
            \STATE $w = w + \sqrt{\Delta_{j,\, j}} v_j b_j$
        \ENDFOR
        \STATE \textbf{Output: $u = L\left(Z,B\right) \Delta v$}
    \end{algorithmic}
\end{algorithm}
A low asymptotic complexity is possible thanks to the special structure of $L\left(Z,B\right)$. More precisely note that for any $j \in [p]$
\begin{align*}
    u_j
    &= \big( L\left( Z,\, B \right) \sqrt{\Delta} v \big)_j\\
    &= \sqrt{\Delta_{j,\, j}} v_j + \sum_{i = 1}^{j - 1} z_j^\top b_i \sqrt{\Delta_{i,\, i}}v_i\\ 
    &= \sqrt{\Delta_{j,\, j}} v_j + z_j^\top w_j
\end{align*}
where $w_j = \sum_{i = 1}^{j - 1} b_i \sqrt{\Delta_{i,\, i}}v_i$. The buffer vector $w$ may be updated iteratively which allows to compute $u$ at low cost.

\paragraph{Sampling knockoffs.} We combine Algorithms~\ref{alg:form_B} and~\ref{alg:sampling_cholesky} in order to sample knockoffs. From Algorithm \ref{alg:sampling_cholesky}, it is clear that neither $B$, $\Delta$ nor $L\left(Z,B\right)$ need to be fully computed and stored in memory.
Instead, the rows of $B$ and the diagonal of $\Delta$ may be computed iteratively,
as shown in Algorithm~\ref{alg:merged_sampling},
which has a time complexity of $\cO\left(pk^2\right)$ and uses $\cO\left(k^2\right)$ memory.
Here a single value is sampled from $\cN(0, \Omega)$; it may be easily extended to sample the $n$ required knockoffs.
\begin{algorithm}[H]
    \caption{Fast Gaussian sampling}\label{alg:merged_sampling}
    \begin{algorithmic}[1]
        \STATE \textbf{Input:} $\Omega = C + ZZ^\top$ and $v \in \R^{p}$ a sample from $\cN\left( 0,\, \idm_p \right)$
        \STATE $M = I_k$
        \STATE $w = \0_k$
        \FOR{$j = 1,\,\dots,\, p$}
            \STATE $t = M z_j$
            \STATE $\delta_j = C_{j,\, j} + z_j^\top t$ \COMMENT{$\delta_j$ is always non-negative}
            \STATE $u_j = \sqrt{\delta_j} v_j + z_j^\top w$
            \IF{$\delta_j > 0$}
            \STATE $b_j' = t / \delta_j$ \COMMENT{$j$th row of $B$}
            \STATE $M = M - t t^\top / \delta_j$
            \STATE $w = w + \sqrt{\delta_j} v_j w_j$
            \ENDIF
        \ENDFOR
        \STATE \textbf{Output:} $u \in \R^p$ sampled from $\cN(0, \Omega)$
    \end{algorithmic}
\end{algorithm}


\subsection{Spectrum of $\Omega$}\label{ss:hybrid}
The careful reader may notice that $s$ computed via Algorithm \ref{alg:low_rank_coordinate_ascent} (which by construction satisfies $\diag (s) \preceq D + UU^\top$) need not satisfy $\diag (s) \preceq 2\Sigma$. This in turn implies $\Omega$ is not PSD. In order to circumvent this problem, we propose two procedures: the \textit{hybrid} approach, and the \textit{low rank} approach. In the hybrid approach, after obtaining $\hat{s}$ from Algorithm \ref{alg:low_rank_coordinate_ascent}, as in \cite{Cand18}, we solve
\[
    \gamma^\ast = \arg\max_{\gamma} \; \gamma \; : \; \diag (\gamma \hat{s}) \preceq 2 \Sigma
\]
which is a minimum eigenvalue problem that can be solved efficiently via bisection over $\gamma$. This then ensure that $\Omega \succeq 0$ when $s = \gamma^\ast \hat{s}$. In the low rank approach, we do as detailed in the previous section; that is, we assume $\Sigma = D +UU^\top$ and sample our knockoffs accordingly. While not theoretically justified, we show in Section \ref{s:numres} how this model is able to outperform most of the other methods in terms of both speed and performance while still seemingly controlling FDR.

\subsection{Estimating Factor Models}\label{s:covest}
In this section, we explain how to efficiently compute a low rank factor model of a covariance matrix $\hat{\Sigma} = \tfrac{1}{n} X X^\top$ constructed from sample points $X \in \mathbb{R}^{n \times p}$. The factor model $\hat{\Sigma} = D + U U^\top$ is computed by the following non-convex optimization problem 
\begin{align}\label{eq:factor_model_opt}
    (D^\ast,U^\ast) = \arg\min_{D, U \in \mathbb{R}^{p \times k}} \; \left\{\|\hat{\Sigma} - D - UU^\top\|_F^2 \; : \; D \succeq 0 \; \text{diagonal}\right\}
\end{align}
where $k \ll p$ is a user-specified rank. Note that when $k = p$, $D^\ast = 0$ and $U = V \Lambda^{1/2}$ where $\hat{\Sigma} = V\Lambda V^\top$. While \eqref{eq:factor_model_opt} is non-convex, we use an alternating minimization scheme for solving it to (local) optimality. Given $UU^\top$, solving for $D$ is direct, we simply set $D_{ii} = \max(0,\hat{\Sigma}_{ii} - U_{ii}^2$). Now, given $D$, getting the optimal~$U$ reduces to projecting $\hat{\Sigma} - D$ onto the space of rank $k$ PSD matrices. The optimal $U$ is given by $U^\ast = V\Lambda^{1/2}$ where $V \in \mathbb{R}^{p \times k}$ are the top $k$ eigenvectors of $\hat{\Sigma} - D$ associated with the top $k$ eigenvalues and $\Lambda \in \mathbb{R}^{k \times k}$ is a diagonal matrix with $\Lambda_{ii} = \max(0,\lambda_i)$ for $i = 1,\hdots k$ (note $\hat{\Sigma} - D$ need not be PSD). However when $p$ is extremely large, we are interested in computing the top $k$ eigenvector, eigenvalue pairs \textit{without} explicitly constructing $\hat{\Sigma}$ for it may be too large to store in memory. We can do this by simply computing the top left singular vectors of $X$ as in e.g. \citep{yurtsever2017sketchy}.

In the setting where $n \ll p$, the empirical covariance tends to be far from the population covariance matrix and is ill-conditioned. To alleviate this, \cite{ledoit2000well} use Stein shrinkage to compute a better estimate of $\Sigma$. We use the regularized covariance (also known as the Ledoit-Wolfe estimator) 
\begin{align}
    \tilde{\Sigma} = (1-\delta) \hat{\Sigma} + \delta \mu I_p, \;\;\;\; \mu = \Tr(\Sigma)/p, \;\;\;\; \delta^\star = \frac{1}{n^2}\frac{\sum_{i = 1}^n (x_i^\top x_i)^2 - n\Tr(\Sigma^2)}
        {\Tr(\Sigma^2) - \Tr(\Sigma)^2 / p}
\end{align}
where $\delta^\star$ is the optimal shrinkage parameter.
These traces may be approximated with stochastic Lanczos quadrature~\citep{ubaru2017trace} without explicitly evaluating $\Sigma$
or $\Sigma^2$.

\section{Experimental Details}\label{appendix_exp}
\subsection{Benchmarks}\label{ss:benchmark_details}

The tolerances for the four methods were set to the following
\begin{enumerate}
    \item \texttt{CVXOPT}: Default 
    \item \texttt{SCS}: \texttt{eps = 1e-6}
    \item FullRank: \texttt{eps = 1e-8}
    \item LowRank: \texttt{eps = 1e-6}
\end{enumerate}
For \texttt{CVXOPT} and \texttt{SCS} the default settings were used. For the full rank and low rank models, our convergence criteria is the relative error on the objective value (i.e. $\tfrac{f(s_{k+1}) - f(s_k)}{f(s_k)} \leq p \cdot 10^{-6}$). \\

In addition to comparing the optimality of the methods based on objective functions, we check the feasibility of the solutions generated by the solutions. Figure \ref{fig:feas} plots the minimum eigenvalue of $2\Sigma - \diag (s)$ versus the dimension. If the minimum eigenvalue is negative, then the solution generated is infeasible. We see that with default tolerances, \texttt{CVXOPT} and \texttt{SCS} generate infeasible solutions whereas our models stay feasible. We noticed that decreasing the default tolerances of \texttt{CVXOPT} and \texttt{SCS} did not help much in this regard and significantly increased the run time of the methods. 
\begin{figure}[h] 
  \centering
    \includegraphics[scale=0.4]{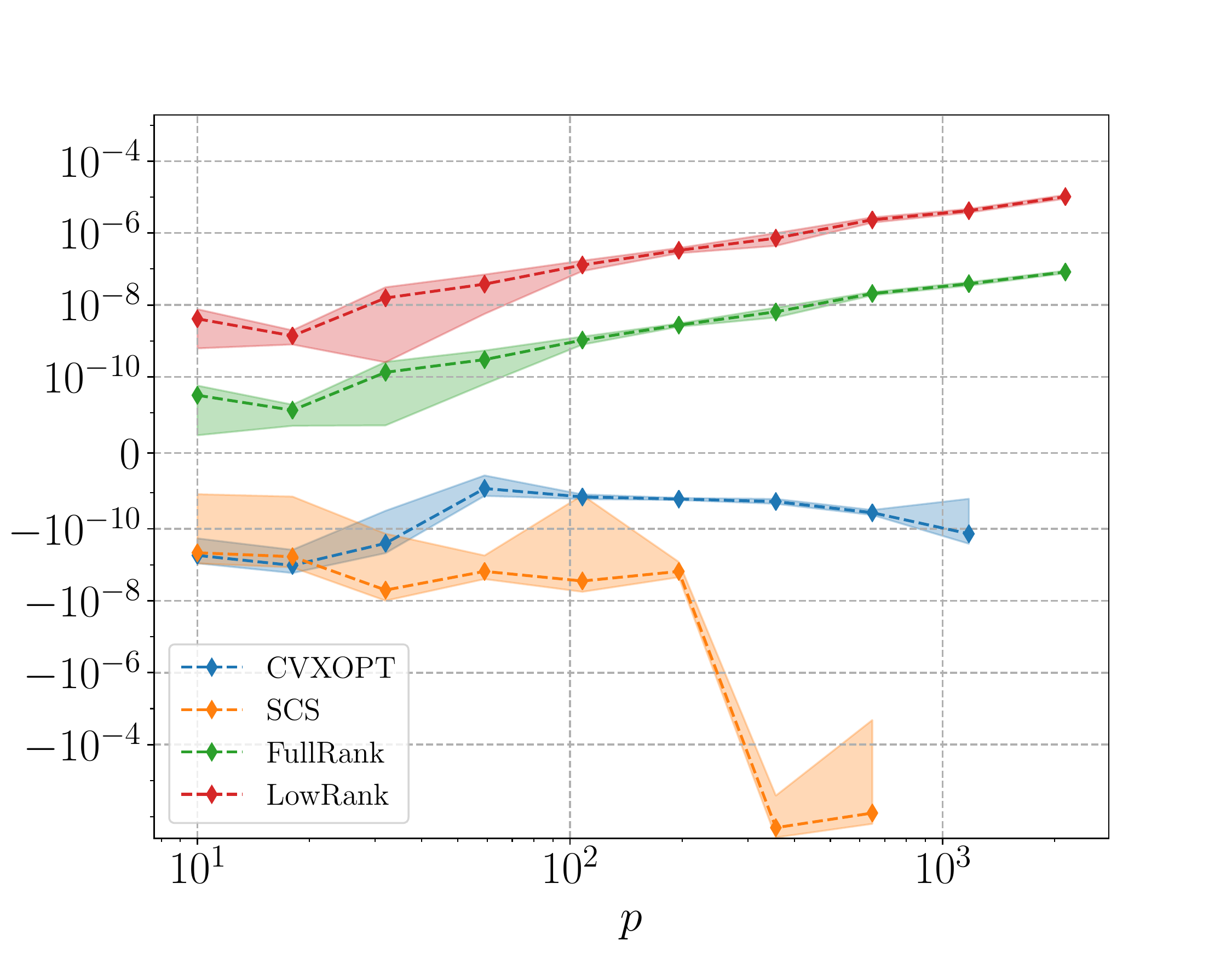}
  \caption{Feasibility plot of $\lambda_{\min}(2\Sigma - \diag (s))$ versus dimension.}
  \label{fig:feas}
\end{figure}
The drop in feasibility for \texttt{SCS} in Figure~\ref{fig:feas} is due to the fact that we reduced the tolerance threshold from $10^{-6}$ to $10^{-4}$ for the last two points because the convergence was extremely slow.

\subsection{Complexity}\label{ss:complexity_details}

\paragraph{SDP convergence.} Empirically, we observed that $5$ to $50$ cycles are enough to converge to a tolerance threshold of $10^{-6}$ on all the covariance matrices we experimented.

\paragraph{Sampling knockoffs.} Sampling from a multivariate normal distribution is traditionally done by finding the Cholesky decomposition of the covariance, which is our baseline.
In the case where the covariance is diagonal plus low-rank, we show that the knockoffs may be sampled in linear time.
However, our implementation of this algorithm is done in Python and \texttt{NumPy}.
We use Python loops because of the iterative nature of the algorithm.
This creates a lot of overhead and we expect the algorithm to be at least $10$ times faster if it were implemented in Cython.

\subsection{Synthetic Data}\label{ss:syntheticdata_details}
The error bars used to generate Figure \ref{fig:synth1} were divided by the square root of the number of trials in order to make it a $68\%$ confidence interval.

\subsection{fMRI (HCP) experiment} \label{ss:fmri_details}

\paragraph{Preprocessing.} Connectivity maps are volumes of size $91 \times 109 \times 91$.
Among these $902,629$ voxels only $212,445$ are in the brain envelope.
We first extract them because they contain the functional information of the brain.
Then, in order to average the noise and reduce the data dimension, we perform a spacial clustering step.
To do so, we make use of the package \texttt{Nilearn} which provides parcellation algorithms.
We employed the Ward clustering method~\citep{Johnson1967HierarchicalCS} because it is known to perform well in terms of accuracy~\citep{thirion2014clustering}.


\paragraph{Knockoffs statistics.} As the knockoff framework offers a lot of freedom regarding the choice of the covariates statistics, we chose to derive them from sparse centroid classifiers, primarily because it can be computed very efficiently as compared to the LCD statistic.
More specifically, for any $L_0$ penalty coefficient $\lambda \geq 0$ we define the sparse centroids parameters $\big( \hat \theta^+(\lambda),\; \hat\theta^-(\lambda) \big)$ as the solutions of the following optimization problem
\begin{align*}
    \big( \hat \theta^+(\lambda),\; \hat\theta^-(\lambda) \big) =  \underset{\theta^+, \theta^- \in \mathbb{R}^p}{\mathrm{arg\,min}} &\; \dfrac{1}{n_+} \sum_{j \in \mathcal{J}^+} \|x_j - \theta^+\|_2^2 + \dfrac{1}{n_-} \sum_{j \in \mathcal{J}^-} \|x_j - \theta^-\|_2^2 + \lambda \|\theta^+ - \theta^-\|_0
\end{align*}
where $\mathcal{J}^\pm$ denotes an index set corresponding to the $\pm 1$ labeled data points and $n_\pm = |\mathcal{J}^\pm|$.
Following the same idea as the LSM statistic~\citep{Cand18},
we define $Z_j = \sup \{\lambda \geq 0 \mid \hat \theta^+(\lambda) \neq \hat\theta^-(\lambda)\}$ for all $j \in [2p]$.
Finally, our statistic takes the following form
\begin{align}\label{eq:centroids_stats}
    W_j = |Z_j| - |Z_{j+p}|
\end{align}
using the difference function which is antisymmetric.
The knockoff filter controls the FDR only if the statistics obey the \emph{flip-sign} property as explained in Section 3.2 of \citep{Cand18}.
It is easy to verify that the statistics defined in Equation~\eqref{eq:centroids_stats} satisfy the requirements.

We also experimented LCD statistics on fMRI data. The computation takes roughly $5$ minutes (as opposed to 2 seconds for the centroids) and the procedure selects approximately the same regions and the same features.
Figure~\ref{fig:fmri_lcd} shows the features that were selected with a FDR target of $10\%$ using LCD statistic.
\begin{figure}[h!]
  \centering
  \begin{minipage}[b]{0.65\textwidth}
    \includegraphics[width=\textwidth]{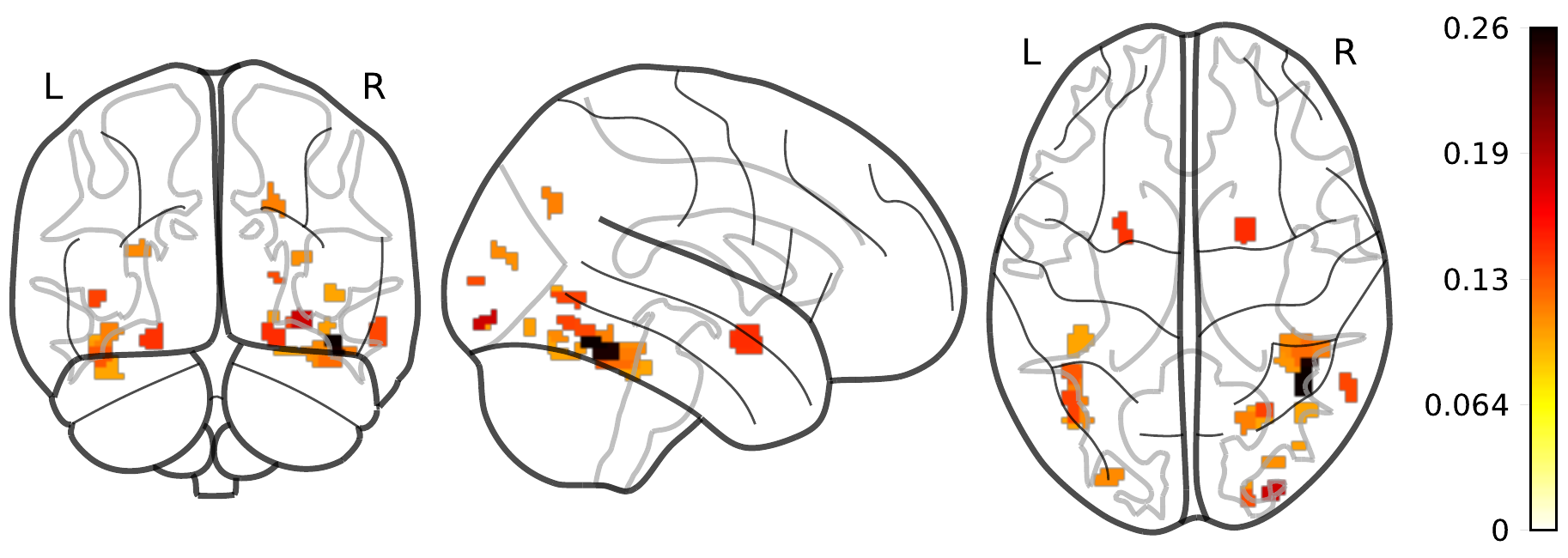}
  \end{minipage}
  \caption{
      Discoveries (26 in total) and their weights obtained by applying the knockoff filter with the low-rank factor model and LCD statistic.
  }
  \label{fig:fmri_lcd}
\end{figure}

\end{document}